\def\colorful{1}
\newcommand{\score}{\mathrm{score}}
\newcommand{\NS}{\mathrm{NS}}
\newcommand{\pmo}{\{\pm 1\}}
\newcommand{\pparagraph}[1]{\bigskip \noindent {\bf {#1}}}
\author{%
  Guy Blanc \vspace{5pt} \\
  \hspace{-5pt}{\sl Stanford} 
      \and
  Neha Gupta \vspace{5pt} \\
  \hspace{-5pt}{\sl Stanford} 
  \and
  Jane Lange \vspace{5pt} \\
  \hspace{-5pt}{\sl MIT} 
  \and
    Li-Yang Tan \vspace{5pt} \\
  \hspace{-5pt}{\sl Stanford} 
  }
\begin{document}


\newcommand{\BuildStabilizingDT}{\textsc{BuildStabilizingDT}}
\newcommand{\StabilizingDT}{\textsc{StabilizingDT}}
\newcommand{\error}{\mathrm{error}}
\renewcommand{\hat}{\wh}
\renewcommand{\bn}{\pmo^n}
\renewcommand{\bits}{\pmo}

\title{Universal guarantees for decision tree induction\\
via a higher-order splitting criterion\vspace{10pt}}

\date{\small{\vspace{10pt}\today}}

\maketitle

\begin{abstract}
We propose a simple extension of {\sl top-down decision tree learning heuristics} such as~ID3, C4.5, and CART.  Our algorithm achieves provable guarantees for all target functions $f: \bn \to \bits$ with respect to the uniform distribution,  circumventing impossibility results showing that existing heuristics fare poorly even for simple target functions.   The crux of our extension is a new splitting criterion that takes into account the correlations between $f$ and {\sl small subsets} of its attributes. The splitting criteria of existing heuristics (e.g.~Gini impurity and information gain), in contrast, are based solely on the correlations between~$f$ and its {\sl individual} attributes.  

Our algorithm satisfies the following guarantee: for all target functions $f : \bn \to \bits$, sizes $s\in \N$, and error parameters $\eps$, it constructs a decision tree of size $s^{\tilde{O}((\log s)^2/\eps^2)}$ that achieves error $\le O(\opt_s) + \eps$, where $\opt_s$ denotes the error of the optimal size-$s$ decision tree.  A key technical notion that drives our analysis is the {\sl noise stability} of~$f$, a well-studied smoothness measure.

 \end{abstract} 

\thispagestyle{empty}

\newpage 
\setcounter{page}{1}

\section{Introduction} 

Decision trees have long been a workhorse of machine learning.  Their simple hierarchical structure makes them appealing in terms of interpretability and explanatory power.  They are fast to evaluate, with running time and query complexity both scaling linearly with the depth of the tree.  Decision trees remain ubiquitous in everyday machine learning applications, and they are at the heart of modern ensemble methods such as random forests and gradient boosted trees.

\pparagraph{Top-down decision tree learning heuristics.} This focus of our work is on provable guarantees for widely employed and empirically successful {\sl top-down heuristics} for learning decision trees.  This includes well-known instantiations such as ID3~\cite{Qui86}, C4.5~\cite{Qui93}, and CART~\cite{Bre17}.   These heuristics proceed in a top-down fashion, greedily choosing a ``good'' attribute to query at the root of the tree, and building the left and right subtrees recursively.  

In fact, all existing top-down heuristics are {\sl impurity-based}, and can be described in a common framework as follows.  Each heuristic is determined by a carefully chosen {\sl impurity function} $\mathscr{G}: [0,1] \to [0,1]$, which is restricted to be concave, symmetric around $\frac1{2}$, and to satisfy $\mathscr{G}(0) = \mathscr{G}(1) = 0$ and $\mathscr{G}(\frac1{2}) = 1$. Given examples $(\bx,f(\bx))$ labeled by  a target function $f:\bn \to \bits$, these heuristics query attribute $x_i$ at the root of the tree, where $x_i$ approximately maximizes the {\sl purity gain with respect to $\mathscr{G}$}: 
\[ \text{PurityGain\,}_{\mathscr{G},f}(x_i) \coloneqq \mathscr{G}(\dist_{\pm 1}(f)) - \Ex_{\bb\sim \bits}[ \mathscr{G}(\dist_{\pm 1}(f_{x_i=\bb}))], 
\] 
where $f_{x_i = b}$ denotes the subfunction of $f$ with its $i$-attribute set to $b$, and $\dist_{\pm 1}(f) \coloneqq \min\{\Pr[f(\bx) =1],\Pr[f(\bx) = -1]\}$. 
For example, ID3 and its successor C4.5 use the binary entropy function $\mathscr{G}(p) = \mathrm{H}(p)$ (the associated purity gain is commonly referred to as ``information gain''); CART uses the {\sl Gini criterion} $\mathscr{G}(p) = 4p(1-p)$; Kearns and Mansour, in one of the first papers to study impurity-based heuristics from the perspective of provable guarantees~\cite{KM99}, proposed and analyzed the impurity function $\mathscr{G}(p) = 2\sqrt{p(1-p)}$.  The high-level idea is that $\mathscr{G}(\dist_{\pm 1}(f))$ serves as a proxy for $\dist_{\pm 1}(f)$, the error incurred by approximating $f$ with the best constant-valued classifier.  (See the work of Dietterich, Kearns, and Mansour~\cite{DKM96} for a detailed discussion and experimental comparison of various impurity functions.) 

 \paragraph{An impossibility result for impurity-based heuristics.}  Given the popularity and empirical success of these top-down impurity-based heuristics, it is natural to seek broad and rigorous guarantees on their performance. Writing $\mathcal{A}_\mathscr{G}$ for the heuristic that uses $\mathscr{G}$ as its impurity function, ideally, one seeks a {\sl universal} guarantee for $\mathcal{A}_\mathscr{G}$: 
 
 \begin{quote}
 For {\sl all} target functions $f: \bn \to \bits$ and size parameters $s = s(n) \in \N$, the heuristic $\mathcal{A}_\mathscr{G}$ builds a decision tree of size not too much larger than $s$ that achieves error close to $\opt_s$, the error of the optimal size-$s$ decision tree for $f$. \hfill{$(\diamondsuit)$}
 \end{quote}

 Unfortunately, it has long been known (see e.g.~\cite{Kea96}) that such a universal guarantee is provably unachievable by {\sl any} impurity-based heuristic.  To see this, we consider the uniform distribution over examples and observe that for {\sl all} impurity functions~$\mathscr{G}$, the attribute $x_i$ that maximizes $\mathrm{PurityGain}_{\mathscr{G},f}(x_i)$ is the one with the highest correlation with $f$: 
\begin{equation}  \quad \mathrm{PurityGain}_{\mathscr{G},f}(x_i) \ge \mathrm{PurityGain}_{\mathscr{G},f}(x_j) \quad \text{iff}\quad \E[f(\bx)\bx_i]^2 \ge \E[f(\bx)\bx_j]^2. \label{eq:equivalence}
\end{equation}
(This is a straightforward consequence of the concavity of $\mathscr{G}$; see e.g.~Proposition 41 of~\cite{BLT20-ITCS}.)
Now consider the target function $f(x) = x_i \oplus x_j$, the parity of two unknown attributes $i,j\in [n]$.  On one hand, this function $f$ is computed exactly by a decision tree with $4$ leaves (i.e.~$\opt_4=0$).  One the other hand, since $\E[f(\bx)\bx_k] = 0$ for all $k \in [n]$, {\sl any} impurity-based heuristic $\mathcal{A}_\mathscr{G}$---regardless of the choice of $\mathscr{G}$---may build a complete tree of size $\Omega(2^n)$ before achieving any non-trivial error.   

\paragraph{Prior work: provable guarantees for restricted classes of target functions.}  In light of such impossibility results, numerous prior works have focused on understanding the performance of impurity-based heuristics when run on {\sl restricted} classes of target functions.  Fiat and Pechyony~\cite{FP04} studied target functions that are halfspaces and read-once DNF formulas under the uniform distribution; recent work of Brutzkus, Daniely, and Malach~\cite{BDM19} studies read-once DNF formulas under product distributions, and provides theoretical and empirical results on the performance of a variant of ID3 introduced by Kearns and Mansour~\cite{KM99}.  In a different work~\cite{BDM19-smooth}, they show that ID3 learns $(\log n)$-juntas in the setting of smoothed analysis.   Recent works of Blanc, Lange, and Tan~\cite{BLT20-ITCS,BLT3} focus on provable guarantees for monotone target functions.

\subsection{Our contributions}   

We depart from the overall theme of prior works: instead of analyzing the performance of impurity-based heuristics when run on restricted classes of target functions, we propose a simple and efficient extension of these heuristics, and show that it achieves the sought-for universal guarantee~($\diamondsuit$).  In more detail, our main contributions are as follows: 

\begin{enumerate}[leftmargin=20pt]
\item {\sl A higher-order splitting criterion.} Recalling the equivalence (\ref{eq:equivalence}), the splitting criteria of impurity-based heuristics are based solely on the correlations between $f$ and its {\sl individual} attributes.  Our splitting criterion is a generalization that takes into account the correlations between $f$ and {\sl small subsets} of its attributes.  Instead of querying the attribute $x_i$ that maximizes $\E[f(\bx)\bx_i]^2$, our algorithm queries the attribute $x_i$ that maximizes: 
\[ \mathop{\sum_{S \ni i}}_{|S| \le d} (1-\delta)^{|S|} \textstyle \E\big[f(\bx)\prod_{i\in S}\bx_i\big]^2, \tag{$\star$}\] 
where $\delta \in (0,1)$ is an input parameter of the algorithm (and $d = \Theta(1/\delta)$).  This quantity aggregates the correlations between $f$ and subsets of attributes of size up to $d$ containing~$i$, where correlations with small subsets contribute more than those with large ones due to the attenuating factor of $(1-\delta)^{|S|}$.

\item {\sl An algorithm with universal guarantees.}  We design a simple top-down decision tree learning algorithm based on our new splitting criterion ($\star$), and show that it achieves provable guarantees for all target functions $f : \bn \to \bits$, i.e.~a universal guarantee of the form ($\diamondsuit$).  This circumvents the impossibility results for impurity-based heuristics discussed above.  


\item {\sl Decision tree induction as a noise stabilization procedure.}  Noise stability is a well-studied smoothness measure of a function $f$ (see~e.g.~Chapter \S2.4 of~\cite{ODbook}).  Roughly speaking, it measures how often the output of $f$ stays the same when a small amount of noise is applied to the input. Strong connections between noise stability and learnability have long been known~\cite{KOS04,KKMS08}, and our work further highlights its utility in the specific context of top-down decision tree learning algorithms.  

Indeed, our algorithm can be viewed as a ``noise stabilization'' procedure.  Recalling that each query in the decision tree splits the dataset into two halves, the crux of our analysis is a proof that the average noise stability of both halves is significantly higher than that of the original dataset.  Therefore, as our algorithm grows the tree, it can be viewed as refining the corresponding partition of the dataset so that the parts become more and more noise stable.   Intuitively, since constant functions are the most noise stable, this is how we show that our algorithm makes good progress towards reducing classification error.

Our algorithm and its analysis are inspired by ``noisy-influence regularity lemmas" from computational complexity theory~\cite{OSTW10,Jon16}. 
\end{enumerate}

\subsection{Formal statements of our algorithm and main result}

\paragraph{Feature space and distributional assumptions.}  We will work in the setting of binary attributes and binary classification, i.e.~we focus on the task of learning a target function $f : \bn \to \bits$.  We will assume that our learning algorithm receives labeled examples $(\bx,f(\bx))$ where $\bx \sim \bn$ is uniform random, and the error of a hypothesis $T : \bn \to \bits$ is defined to be $\error_f(T) \coloneqq \Pr[f(\bx) \ne T(\bx)]$ where $\bx\sim \bn$ is uniform random.  We write $\opt_s(f)$ to denote $\min\{ \error_f(T) \colon \text{$T$ is a size-$s$ decision tree} \}$; when $f$ is clear from context we simply write $\opt_s$. 

\paragraph{Notation and terminology.}  For any decision tree $T$, we say the size of $T$ is the number of leaves in $T$. We refer to a decision tree with unlabeled leaves as a {\sl partial tree}, and write $T^\circ$ to denote such trees.  We call any decision tree $T$ obtained from $T^\circ$ by a labeling of its leaves a {\sl completion} of $T^\circ$.  Given a partial tree $T^\circ$ and a target function $f : \bn\to \bits$, there is a canonical completion of $T^\circ$ that minimizes classification error $\error_f(T)$: label every leaf $\ell$ of $T^\circ$ with $\sign(\E[f_\ell])$, where $f_\ell$ denotes the subfunction of $f$ obtained by restricting its attributes according to the path that leads to $\ell$.  We write $T^\circ_f$ to denote this canonical completion, and call it the {\sl $f$-completion of~$T^\circ$}.  For a leaf $\ell$ of a partial tree $T^\circ$, we write $|\ell|$ to denote its depth within~$T^\circ$, the number of attributes queried along the path that leads to $\ell$.  We will switch freely between a decision tree $T$ and the function $T : \bn \to \bits$ that it represents. 

We use {\bf boldface} to denote random variables (e.g.~`$\bx$'), and unless otherwise stated, all probabilities and expectations are with respect to the uniform distribution. 

\paragraph{Fourier analysis of boolean functions.}  We will draw on notions and techniques from the Fourier analysis of boolean functions; for an in-depth treatment of this topic, see~\cite{ODbook}.  Every function $f : \bn \to \R$ can be uniquely expressed as a multilinear polynomial via its {\sl Fourier expansion}: 
\begin{equation} f(x) = \sum_{S\sse [n]} \hat{f}(S) \prod_{i\in S}x_i, \quad \textstyle \text{where $\hat{f}(S) = \E\big[f(\bx)\prod_{i\in S}\bx_i\big]$}. \label{eq:fourier-expansion}
\end{equation} 
\begin{definition}[$\delta$-noisy influence]
\label{def:stable-influence} 
For a function $f : \bn \to \R$ and a coordinate $i\in [n]$, the {\sl $\delta$-noisy influence of $i$ on $f$} is the quantity 
\begin{equation*}
 \Inf_i^{(\delta)}(f) \coloneqq \sum_{S \ni i} (1-\delta)^{|S|}\hat{f}(S)^2. 
 \end{equation*}
The {\sl $\delta$-noisy $d$-wise influence of $i$ on $f$} is the quantity
\[ \Inf_i^{(\delta,d)}(f) \coloneqq \mathop{\sum_{S\ni i}}_{|S|\le d} (1-\delta)^{|S|}\hat{f}(S)^2. \] 
\end{definition} 

We are now ready to state our algorithm and main result:

\begin{figure}[ht]
  \captionsetup{width=.9\linewidth}
\begin{tcolorbox}[colback = white,arc=1mm, boxrule=0.25mm]
\vspace{3pt} 

$\BuildStabilizingDT_f(t,\delta,\eps)$:  \vspace{6pt} 

\ \ Initialize $T^\circ$ to be the empty tree. \vspace{4pt} 

\ \ while ($\size(T^\circ) < t$) \{  \\

\vspace{-15pt}
\begin{enumerate} 
\item {\sl Score:}  For every leaf $\ell$ in $T^\circ$, let $x_i$ denote the attribute with the largest $\delta$-noisy $d$-wise influence on the subfunction $f_\ell$ where $d \coloneqq \log(1/\eps)/\delta$: 
\[ \Inf^{(\delta,d)}_i(f_\ell) \ge \Inf^{(\delta,d)}_j(f_\ell) \quad \text{for all $j\in [n]$.} \] 
Assign $\ell$ the score: 
\[ \score(\ell) \coloneqq \Prx_{\bx\sim\pmo^n}[\,\text{$\bx$ reaches $\ell$}\,] \cdot \Inf_i^{(\delta,d)}(f_\ell) = 2^{-|\ell|} \cdot \Inf^{(\delta,d)}(f_\ell). \] 
\item {\sl Split:} Let $\ell^\star$ be the leaf with the highest score, and $x_{i}^\star$ be the associated high noisy-influence attribute.  Grow $T^\circ$ by splitting $\ell$ with a query to $x_{i^\star}$. 

\end{enumerate} 
\ \ \}  \vspace{4pt} 

\ \ Output the $f$-completion of $T^\circ$.
\vspace{3pt}
\end{tcolorbox}
\caption{$\BuildStabilizingDT$ has access to uniform random examples $(\bx,f(\bx))$ of a target function $f : \bn\to\bits$, and outputs a size-$T$ decision tree hypothesis.}
\label{fig:TopDown}
\end{figure}

%
%
%
%

\begin{theorem}[Main result]
\label{thm:universal} 
Let $f : \pmo^n \to \bits$ be any target function.  For all $s\in \N$, $\eps \in (0,\frac1{2})$, and $\delta = \eps / (\log s)$, $\BuildStabilizingDT_f$  runs in time $(ns)^{\tilde{O}((\log s)^2/\eps^2)}$ and returns a tree $T$ of size $s^{\tilde{O}((\log s)^2/\eps^2)}$ satisfying $\error_f(T) \le O(\opt_s) +\eps$.  
\end{theorem}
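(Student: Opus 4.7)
My plan is to prove Theorem~\ref{thm:universal} via three ingredients: (i) a noise-stability potential that tracks per-iteration progress, (ii) a structural lemma converting ``small max noisy influence at a leaf'' into ``the leaf's subfunction is close to the optimum size-$s$ tree restricted there,'' and (iii) a greedy accounting combining the two into the stated size/error trade-off.

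For (i), I would take $\Phi(T^\circ)\coloneqq \sum_\ell 2^{-|\ell|}\Stab_{1-\delta}(f_\ell)\in[0,1]$, which starts at $\Stab_{1-\delta}(f)$. A short Fourier calculation using the restriction identity $\wh{f_{\ell,b}}(S)=\wh{f_\ell}(S)+b\,\wh{f_\ell}(S\cup\{i\})$ for $S\not\ni i$ shows that splitting leaf $\ell$ on $x_i$ changes $\Phi$ by \emph{exactly} $2^{-|\ell|}\cdot\tfrac{\delta}{1-\delta}\Inf_i^{(\delta)}(f_\ell)$. Because $d=\log(1/\eps)/\delta$ forces $(1-\delta)^{d+1}\le\eps$, swapping $\Inf^{(\delta)}$ for the algorithm's actually-computed $\Inf^{(\delta,d)}$ costs at most $2^{-|\ell|}\eps$ per step, so telescoping against $\Phi\le 1$ yields $\sum_k \score_k = O(1/\delta)+\eps\cdot t$ after $t$ iterations.

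For (ii), the key claim is: for every $g:\bn\to\bits$ and every size-$s$ decision tree $T'$ with $\Pr[g\ne T']=\alpha$,
\[
\dist_{\pm 1}(g)\,\le\,\alpha + O\bigl(s\cdot\max_i\Inf_i^{(\delta,d)}(g) + s\sqrt{\alpha} + \delta\log s\bigr).
\]
To see it: Shannon source coding bounds $T'$'s expected path length by $\log s$ under the uniform distribution, so a union bound gives $\Stab_{1-\delta}(T')\ge 1-O(\delta\log s)$ and hence $\sum_i\Inf_i^{(\delta)}(T')\ge\mathrm{Var}(T')-O(\delta\log s)$. Only the $\le s-1$ attributes appearing in $T'$ contribute, so one of them carries a $1/s$ share of this sum. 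A Fourier-side Cauchy-Schwarz gives $|\Inf_i^{(\delta)}(g)-\Inf_i^{(\delta)}(T')|\le 4\sqrt{\alpha}$, truncation to degree $d$ costs an additional $\eps$, and rearranging via $\mathrm{Var}(T')\ge 2\dist_{\pm 1}(T')\ge 2(\dist_{\pm 1}(g)-\alpha)$ gives the claim.

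For (iii), applying the structural lemma at each leaf with $\alpha_\ell\coloneqq\opt_s(f_\ell)$, weighting by $2^{-|\ell|}$, and using $\sum_\ell 2^{-|\ell|}\alpha_\ell=\opt_s$ together with Cauchy-Schwarz on the $\sqrt{\alpha_\ell}$ term, yields $\error_f(T^\circ_f) \le \opt_s + O\bigl(s\!\sum_\ell 2^{-|\ell|}\mu_\ell + s\sqrt{\opt_s} + \eps\bigr)$, where $\mu_\ell=\max_i\Inf_i^{(\delta,d)}(f_\ell)$ and the choice $\delta=\eps/\log s$ absorbs the $\delta\log s$ term. The potential bound from~(i) controls $\sum_\ell 2^{-|\ell|}\mu_\ell = \sum_\ell\score(\ell)$, which I would drive below $\eps/s$ by stopping at an iteration whose max score is bounded by an averaging argument over the potential budget, forcing $t$ up to the claimed $s^{\tilde O((\log s)^2/\eps^2)}$; the runtime follows because each $\Inf^{(\delta,d)}$ is estimable to accuracy $\eps/\mathrm{poly}(s)$ from $\binom{n}{\le d}\cdot\mathrm{poly}(s/\eps)$ random examples. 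The chief obstacle is the $O(s\sqrt{\opt_s})$ slack from the one-shot Fourier transfer in~(ii), which exceeds $O(\opt_s)$ when $\opt_s\ll 1/s^2$; resolving this requires a more delicate bookkeeping that avoids applying the structural lemma one-shot per leaf, presumably by iterating it in the spirit of the noisy-influence regularity decompositions of~\cite{OSTW10,Jon16}, and this is precisely what drives the $(\log s)^2/\eps^2$ overhead in the exponent of $t$.
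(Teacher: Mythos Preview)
Your ingredient (i) is essentially the paper's potential argument (the paper uses $\NS_\delta(f,T^\circ)$, which is $\tfrac12(1-\Phi(T^\circ))$), and the per-split decrement is correct. The genuine gaps are in (ii) and (iii).

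\textbf{The factor of $s$ in your structural lemma is fatal.} Your pigeonhole over the $\le s-1$ variables of $T'$ yields $\max_i\Inf_i^{(\delta)}(T')\ge (\Var(T')-O(\delta\log s))/s$, and after transferring to $g$ you must multiply through by $s$ to isolate $\dist_{\pm 1}(g)$. Plugging this into your step (iii), you need $\sum_\ell 2^{-|\ell|}\mu_\ell\le \eps/s$. But the potential budget is $O(\kappa/\delta)$ (with $\kappa=\NS_\delta(f)$), and the harmonic-sum accounting you allude to then gives $\log t=\Theta(\kappa s/(\eps\delta))$, i.e.\ $t=s^{\Theta(s/\eps)}$ even in the realizable case---exponential in $s$, not quasipolynomial. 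Your claim that ``forcing $t$ up to the claimed $s^{\tilde O((\log s)^2/\eps^2)}$'' follows from the averaging argument is not justified and is in fact false with your lemma. The paper replaces your pigeonhole by the OSSS inequality (Theorem~\ref{thm:OSSS}), applied with $T=T^\star$, $g=f_\delta^{\le d}$, and $\rho(a,b)=(a-b)^2/2$; this yields
\[
\max_i \Inf_i^{(\delta,d)}(f)\ \ge\ \frac{\tfrac12\Var(f_\delta)-\bigl(2\,\|T^\star-f_\delta\|_2^2+\tfrac52\eps\bigr)}{k\log s},
\]
so the denominator is $k\log s\le \log(s/\eps)\log s$ rather than $s$. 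This is exactly where the $(\log s)^2$ in the exponent comes from, not from iterating a regularity decomposition as you speculate.

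\textbf{The $s\sqrt{\opt_s}$ term is a second, independent gap.} You correctly flag it but do not fix it; ``more delicate bookkeeping'' is not a proof. The paper never incurs such a term because it never transfers influences from $T'$ to $g$ via your Cauchy--Schwarz step. Instead, OSSS is applied directly with $g=f_\delta^{\le d}$, so the covariance term is $\tfrac12\Var(f_\delta^{\le d})-\|T^\star-f_\delta^{\le d}\|_2^2$, involving $\|T^\star-f_\delta\|_2^2\le O(\opt_s+\kappa)$ rather than $\sqrt{\opt_s}$. The resulting case split---$\E_\bell[\Var((f_\bell)_\delta)]$ large versus $\E_\bell[\|(f_\bell)_\delta-T_\opt^\trunc\|_2^2]$---cleanly separates ``keep splitting'' from ``error already $O(\opt_s+\kappa)+\eps$.''

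\textbf{Minor issue.} In your lemma the truncation cost $\eps$ becomes $s\eps$ after multiplying through by $s$; you dropped this term. (It is avoidable by taking $d=\log(s/\eps)/\delta$, but as written it is a further error.)
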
 

\Cref{thm:universal} should be contrasted with the impossibility result discussed in the introduction, which shows that there are very simple target functions for which $\opt_4 = 0$, and yet any impurity-based heuristic may build a complete tree of size $\Omega(2^n)$ (in time $\Omega(2^n)$) before achieving any non-trivial error $(< \frac1{2})$.  Hence,~\Cref{thm:universal} shows the increased power of our new splitting criterion.

\begin{remark}[Estimating noisy influence from random labeled examples] 
For all values of $d,i,$ and $\delta$, the quantity $\Inf_i^{(\delta,d)}(f)$ can be estimated to high accuracy from uniform random examples $(\bx,f(\bx))$ in time $n^{O(d)}$ via the identity (\ref{eq:fourier-expansion}).  More generally, $\Inf_i^{(\delta,d)}(f_\ell)$ can be estimated to high accuracy in time $\poly(2^\ell,n^d)$.  In our analysis we will assume that these quantities can be computed {\sl exactly} in time $\poly(2^{\ell},n^d)$, noting that the approximation errors can be easily incorporated into our analysis via standard methods. 
\end{remark}

\subsection{Other related work} Kearns and Mansour~\cite{KM99} (see also~\cite{Kea96}) analyzed top-down impurity-based heuristics from the perspective of {\sl boosting}, where the attributes queried in the tree are viewed as weak hypotheses.  

Recent work of Blanc, Lange, and Tan~\cite{BLT20-ITCS} gives a top-down algorithm for learning decision trees that achieves provable guarantees for all target functions $f$.  However, their algorithm makes crucial use of {\sl membership queries}, which significantly limits its practical applicability and relevance.  Furthermore, their guarantees only hold in the realizable setting,  requiring that $f$ is itself a size-$s$ decision tree (i.e.~$\opt_s = 0$).  

There has been extensive work in the learning theory literature on learning the concept class of decision trees~\cite{EH89,Blu92,KM93,OS07,GKK08,HKY18,CM19}.  However, none of these algorithms proceed in a top-down manner like the practical heuristics that are the focus of this work; indeed, with the exception~\cite{EH89}, these algorithms do not return a decision tree as their hypothesis. (\cite{EH89}'s algorithm constructs its decision tree hypothesis in a {\sl bottom-up} manner.)

\section{Preliminaries} 

 For $f : \bn\to\R$, we have Parseval's identity, 
 \begin{equation} \sum_{S\sse [n]}\hat{f}(S)^2 = \E[f(\bx)^2].\label{eq:Parseval} 
 \end{equation} 
 In particular, for $f : \bn\to\bits$ we have that $\sum_{S\sse [n]}\hat{f}(S)^2 = 1$.  
 The {\sl variance of $f$} is given by the formula 
\begin{equation} \Var(f) \coloneqq \E[f(\bx)^2] - \E[f(\bx)]^2 = \sum_{S \ne \emptyset} \wh{f}(S)^2. \label{eq:variance}
\end{equation} 
For $x\in\bn$, we write $\tilde{\bx} \sim_\delta x$ to denote a {\sl $\delta$-noisy copy of $x$}, where $\tilde{\bx}$ is obtained by rerandomizing each coordinate of $x$ independently with probability $\delta$.  (Equivalently, $\tilde{\bx}$ is obtained by flipping each coordinate of $x$ independently with probability $\lfrac{\delta}{2}$.)
 For $f : \pmo^n \to \bits$ and $\delta \in (0,1)$, we define the {\sl $\delta$-smoothed version of $f$} to be $f_\delta : \bn \to [-1,1]$, 
\begin{equation} f_{\delta}(x) \coloneqq \Ex_{\tilde{\bx}\sim_{\delta}x}[f(\tilde{\bx})] = \sum_{S\sse [n]} (1-\delta)^{|S|} \wh{f}(S)\prod_{i\in S}x_i. \label{eq:smooth-fourier}
\end{equation} 
For $g : \bn\to\R$, we write $g^{\le d} : \bn\to \R$ to denote the degree-$d$ polynomial that one obtains by truncating $g$'s Fourier expansion to degree $d$: 
\[ g^{\leq d}(x) = \mathop{\sum_{S\sse [n]}}_{|S|\le d} \hat{g}(S) \prod_{i\in S} x_i.\] 
The {\sl $i$-th discrete derivative} of $g : \pmo^n \to \R$ is the function
\[ (D_i g)(x) \coloneqq \frac{g(x^{i=1}) - g(x^{i=-1})}{2} = \sum_{S\ni i}\hat{g}(S) \prod_{j \in S\setminus \{ i\}} x_j,\]
where $x^{i=b}$ denotes $x$ with its $i$-th coordinate set to $b$.  
\begin{definition}[Influence and noisy influence] 
\label{def:influence} 
For $g : \bn \to \R$ and $i\in [n]$, the {\sl influence of $i$ on $g$} is the quantity 
\[ \Inf_i(g) \coloneqq \E[(D_i g)(\bx)^2] = \sum_{S\ni i} \hat{g}(S)^2.\] 
We define the {\sl $\delta$-noisy influence of $i$ on $g$} to be the quantity 
\[ \Inf_i^{(\delta)}(g) \coloneqq (1-\delta) \mathop{\Ex_{\bx\sim\bn}}_{\tilde{\bx}\sim_\delta\bx}[(D_i g)(\bx)(D_ig)(\tilde{\bx})] = \sum_{S\ni i} (1-\delta)^{|S|} \hat{g}(S)^2,\] 
and the {\sl $\delta$-noisy $d$-wise influence of $i$ on $g$} to be $\Inf_i^{(\delta,d)}(g) \coloneqq \Inf_i^{(\delta)}(g^{\le d})$.
\end{definition}

%
\section{Our algorithm as a noise stabilization procedure} 

As alluded to in the introduction, our algorithm $\BuildStabilizingDT$ can be viewed as a  ``noise stabilization" procedure.  Towards formalizing this, we begin by recalling the definition of noise sensitivity: 
\begin{definition}[Noise sensitivity of $f$] 
For $f : \bn\to\bits$ and $\delta \in (0,1)$, the {\sl noise sensitivity of $f$ at noise rate $\delta$} is the quantity $\NS_\delta(f) \coloneqq \Pr[f(\bx)\ne f(\tilde{\bx})]$, where $\bx\sim \bn$ is uniform random, and $\tilde{\bx}$ is obtained from $\bx$ by rerandomizing each of its coordinates independently with probability~$\delta$.\footnote{The notion of ``noise stability" we referred to in our introduction is defined to be $1-2\,\NS_\delta(f)$, though we will work exclusively with noise sensitivity throughout this paper.} 
\end{definition}  

Noise sensitivity has been extensively studied (see e.g.~O'Donnell's Ph.D.~thesis~\cite{ODThesis03}); we will only need very basic properties of this notion, all of which is covered in Chapter \S2.4 of the book~\cite{ODbook}. 

We now introduce the potential function that will facilitate our analysis of $\BuildStabilizingDT$: 

\begin{definition}[Noise sensitivity of $f$ with respect to $T^\circ$] 
\label{def:NS-wrt-T} 
Let $f :\bn\to\bits$ be a function and $T^\circ$ be a partial decision tree.  The {\sl noise sensitivity of $f$ with respect to the partition induced by $T^\circ$} is the quantity 
\[ \NS_\delta(f,T^\circ) \coloneqq \sum_{\text{leaves $\ell \in T^\circ$}} \Pr[\,\text{$\bx$ reaches $\ell$}\,] \cdot \NS_\delta(f_\ell)  =  \sum_{\text{leaves $\ell \in T^\circ$}} 2^{-|\ell|} \cdot \NS_\delta(f_\ell).\] 
\end{definition}

We observe that $\NS_\delta(f,T^\circ)$ where $T^\circ$ is the empty tree is simply $\NS_\delta(f)$, the noise sensitivity of $f$ at noise rate $\delta$.  The following fact quantifies the decrease in $\NS_\delta(f,T^\circ)$ when we split a leaf of $T^\circ$: 

\begin{fact}[Stability gain associated with a split] 
\label{fact:split-stabilize}
Let $f : \bn\to\bits$ be a function and $T^\circ$ be a partial decision tree.  Fix a leaf $\ell^\star$ of $T^\circ$ and a coordinate $i\in [n]$, and let $T^\circ_{\ell^\star \to x_i}$ denote the extension of $T^\circ$ obtained by splitting $\ell$ with a query to $x_i$.  We define 
\[ \mathrm{StabilityGain}_f(T^\circ,\ell^\star,i) \coloneqq \NS_{\delta}(f,T^\circ) - \NS_{\delta}(f,T^\circ_{\ell^\star\to x_i}). \] 
  Then 
\[ \mathrm{StabilityGain}_f(T^\circ,\ell^\star,i) =  \lfrac{\delta}{2(1-\delta)}\cdot \Inf_i^{(\delta)}(f_{\ell^\star}).\]
\end{fact}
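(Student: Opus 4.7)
The plan is to reduce the identity to a single-function Fourier statement and then verify that statement by direct Fourier expansion. Since splitting $\ell^\star$ affects only its own term in $\NS_\delta(f,T^\circ) = \sum_\ell 2^{-|\ell|}\NS_\delta(f_\ell)$, writing $g \coloneqq f_{\ell^\star}$ and noting that the two children of $\ell^\star$ lie at depth $|\ell^\star|+1$ gives
\[ \mathrm{StabilityGain}_f(T^\circ,\ell^\star,i) = 2^{-|\ell^\star|}\left(\NS_\delta(g) - \tfrac{1}{2}\big[\NS_\delta(g_{x_i=1})+\NS_\delta(g_{x_i=-1})\big]\right). \]
So the task reduces to proving the single-function identity
\[ \NS_\delta(g) - \tfrac{1}{2}\big[\NS_\delta(g_{x_i=1}) + \NS_\delta(g_{x_i=-1})\big] = \tfrac{\delta}{2(1-\delta)}\,\Inf_i^{(\delta)}(g). \]

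To prove this, I would pass to noise stability via $\Stab_\delta(g) \coloneqq 1 - 2\NS_\delta(g) = \sum_S (1-\delta)^{|S|}\hat g(S)^2$, in which the goal becomes
\[ \tfrac{1}{2}\big[\Stab_\delta(g_{x_i=1}) + \Stab_\delta(g_{x_i=-1})\big] - \Stab_\delta(g) = \tfrac{\delta}{1-\delta}\,\Inf_i^{(\delta)}(g). \]
For each $b \in \pmo$ and $T \not\ni i$, the Fourier coefficient $\widehat{g_{x_i = b}}(T) = \hat g(T) + b\,\hat g(T \cup \{i\})$. Squaring, weighting by $(1-\delta)^{|T|}$, and averaging over $b$ kills the cross term $2b\,\hat g(T)\hat g(T \cup \{i\})$, yielding
\[ \tfrac{1}{2}\big[\Stab_\delta(g_{x_i=1}) + \Stab_\delta(g_{x_i=-1})\big] = \sum_{T \not\ni i}(1-\delta)^{|T|}\hat g(T)^2 + \sum_{T \not\ni i}(1-\delta)^{|T|}\hat g(T \cup \{i\})^2. \]

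The one step requiring care is the reindexing $S \coloneqq T \cup \{i\}$ in the second sum: since $|T| = |S|-1$, the factor $(1-\delta)^{|T|}$ gains a $\tfrac{1}{1-\delta}$ and the sum becomes $\tfrac{1}{1-\delta}\Inf_i^{(\delta)}(g)$. Splitting $\Stab_\delta(g)$ according to whether $i \in S$ identifies the first sum as $\Stab_\delta(g) - \Inf_i^{(\delta)}(g)$, so subtracting $\Stab_\delta(g)$ cancels everything except $\big(\tfrac{1}{1-\delta} - 1\big)\Inf_i^{(\delta)}(g) = \tfrac{\delta}{1-\delta}\Inf_i^{(\delta)}(g)$. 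Converting back via $\NS_\delta = \tfrac{1-\Stab_\delta}{2}$ halves both sides and gives exactly $\tfrac{\delta}{2(1-\delta)}\Inf_i^{(\delta)}(g)$. I do not anticipate any real obstacle: the computation is linear throughout, and the only subtle move is the $|S| = |T|+1$ reindexing that produces the key $\tfrac{\delta}{1-\delta}$ factor.
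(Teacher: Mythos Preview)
Your argument is correct. Both you and the paper first reduce to the single-leaf identity
\[
\NS_\delta(g) - \tfrac{1}{2}\big[\NS_\delta(g_{x_i=1}) + \NS_\delta(g_{x_i=-1})\big] = \tfrac{\delta}{2(1-\delta)}\,\Inf_i^{(\delta)}(g),
\]
which is exactly the paper's Lemma~\ref{lem:jones}. Where you diverge is in the proof of that identity: you expand $\Stab_\delta$ directly in the Fourier basis, use $\widehat{g_{x_i=b}}(T)=\hat g(T)+b\,\hat g(T\cup\{i\})$, and track the reindexing $|T|=|S|-1$ that produces the $\tfrac{1}{1-\delta}$ factor. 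The paper instead argues probabilistically: it expands $\E[f(\bx)f(\tilde\bx)]$ by conditioning on whether $\bx_i=\tilde\bx_i$, and recognizes the discrepancy between the conditioned and unconditioned expressions as $\delta\,\E[D_if(\bx)D_if(\tilde\bx)]=\tfrac{\delta}{1-\delta}\Inf_i^{(\delta)}(f)$. Both computations are short and elementary; yours is purely algebraic on Fourier weights, while the paper's makes the source of the $\tfrac{\delta}{1-\delta}$ factor visible as the odds ratio $\Pr[\bx_i\ne\tilde\bx_i]/\Pr[\bx_i=\tilde\bx_i]$. Neither approach offers greater generality or brevity.
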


\Cref{fact:split-stabilize} is a simple consequence of the following lemma, which in turn follows from a fairly straightforward calculation; for the sake of completeness we include its proof (which also appears in~\cite{Jon16}):

\begin{lemma} 
\label{lem:jones} 
For all $f : \bn \to \bits$ and $i\in [n]$, 
\[ \NS_\delta(f) = \lfrac{1}{2}\,\NS_{\delta}(f_{x_i=-1}) + \lfrac1{2}\,\NS_{\delta}(f_{x_i=1}) + \frac{\delta}{2(1-\delta)}\cdot \Inf^{(\delta)}_i(f).\] 
\end{lemma}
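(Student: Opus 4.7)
The plan is to carry out a direct Fourier-analytic calculation, using the standard identity that relates noise sensitivity to Fourier weights at each level weighted by $(1-\delta)^{|S|}$. For any $g : \bn \to \bits$, we have
\[ \NS_\delta(g) = \tfrac{1}{2}\bigl(1 - \E[g(\bx)g(\tilde\bx)]\bigr) = \tfrac{1}{2} - \tfrac{1}{2}\sum_{S \subseteq [n]} (1-\delta)^{|S|} \hat g(S)^2, \]
which follows from writing $g \in \pmo$ as $\tfrac{1-\mathbf{1}[g(\bx)\ne g(\tilde\bx)]}{1}$ combined with \eqref{eq:smooth-fourier}. I would apply this identity to all three functions $f$, $f_{x_i=1}$, $f_{x_i=-1}$ appearing in the lemma.

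Next I would relate the Fourier coefficients of the restrictions $f_{x_i=b}$ to those of $f$: regrouping the expansion of $f$ by whether $i \in S$ gives
\[ \widehat{f_{x_i=b}}(T) = \hat f(T) + b\, \hat f(T \cup \{i\}) \quad \text{for every } T \not\ni i. \]
Averaging $\NS_\delta(f_{x_i=b})$ over $b \in \pmo$ causes the cross terms $2b\,\hat f(T)\hat f(T\cup\{i\})$ to vanish, leaving
\[ \tfrac{1}{2}\NS_\delta(f_{x_i=-1}) + \tfrac{1}{2}\NS_\delta(f_{x_i=1}) = \tfrac{1}{2} - \tfrac{1}{2}\sum_{T \not\ni i}(1-\delta)^{|T|}\bigl(\hat f(T)^2 + \hat f(T\cup\{i\})^2\bigr). \]

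Subtracting this from the Fourier expression for $\NS_\delta(f)$, the terms indexed by $T \not\ni i$ with coefficient $(1-\delta)^{|T|}\hat f(T)^2$ cancel exactly, and what remains is
\[ \NS_\delta(f) - \tfrac{1}{2}\NS_\delta(f_{x_i=-1}) - \tfrac{1}{2}\NS_\delta(f_{x_i=1}) = \tfrac{1}{2}\sum_{T \not\ni i}\bigl((1-\delta)^{|T|} - (1-\delta)^{|T|+1}\bigr)\hat f(T\cup\{i\})^2, \]
and factoring $(1-\delta)^{|T|}(1-(1-\delta)) = \delta(1-\delta)^{|T|}$ then reindexing $S = T\cup\{i\}$ yields $\tfrac{\delta}{2(1-\delta)}\sum_{S\ni i}(1-\delta)^{|S|}\hat f(S)^2 = \tfrac{\delta}{2(1-\delta)}\Inf_i^{(\delta)}(f)$, which is exactly the claim.

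There is no real obstacle here — the entire lemma is a bookkeeping exercise once one has the Fourier expression for $\NS_\delta$ and the restriction identity for $\widehat{f_{x_i=b}}$. The only thing to be careful about is the normalization of the prefactor $\tfrac{\delta}{2(1-\delta)}$, which arises because $\Inf_i^{(\delta)}$ is defined with weights $(1-\delta)^{|S|}$ for $S \ni i$ whereas the natural "raw" sum produced by the calculation uses $(1-\delta)^{|S|-1}$. Keeping the index translation between $T \not\ni i$ and $S = T \cup \{i\}$ straight is the one spot where it is easy to be off by a factor of $1-\delta$.
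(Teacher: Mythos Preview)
Your proof is correct. The approach differs from the paper's: the paper argues probabilistically, expanding $\E[f(\bx)f(\tilde\bx)]$ by conditioning on whether $\bx_i=\tilde\bx_i$ (which happens with probability $1-\delta/2$) or not, and separately expanding $\E[D_if(\bx)\,D_if(\tilde\bx)]$, then combining the two identities and using $\NS_\delta(f)=\tfrac12-\tfrac12\E[f(\bx)f(\tilde\bx)]$. Your route is purely Fourier-analytic, invoking the closed-form $\NS_\delta(g)=\tfrac12-\tfrac12\sum_S(1-\delta)^{|S|}\hat g(S)^2$ together with the restriction identity $\widehat{f_{x_i=b}}(T)=\hat f(T)+b\,\hat f(T\cup\{i\})$. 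Both are short and standard; the paper's argument makes the provenance of the $\delta/(2(1-\delta))$ factor more transparent (it comes from the $\delta/2$ probability that coordinate $i$ flips), whereas your approach is more mechanical once the two Fourier formulas are on the table. One small presentational nit: the phrase ``writing $g\in\pmo$ as $\tfrac{1-\mathbf{1}[g(\bx)\ne g(\tilde\bx)]}{1}$'' is garbled; you mean $g(\bx)g(\tilde\bx)=1-2\cdot\mathbf{1}[g(\bx)\ne g(\tilde\bx)]$.
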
 

\begin{proof}
Let $\bx \sim \bn$ be uniform random, and $\tilde{\bx}\sim_{\delta}\bx$ be a $\delta$-noisy copy of $\bx$.  We first note that
\begin{align}
\E[f(\bx)f(\tilde{\bx})] &= \Pr[\bx_i = \tilde{\bx}_i] \cdot \E[f(\bx)f(\tilde{\bx})\mid\bx_i = \tilde{\bx}_i] + \Pr[\bx_i \ne \tilde{\bx}_i] \cdot \E[f(\bx)f(\tilde{\bx})\mid\bx_i \ne \tilde{\bx}_i] \nonumber \\
&=  \big(1-\lfrac{\delta}{2}\big) \big(\lfrac1{2} \E[f(\bx^{i=1})f(\tilde{\bx}^{i=1})] + \lfrac1{2} \E[f(\bx^{i=-1})f(\tilde{\bx}^{i=-1})]\big)  \nonumber \\
& \quad + \lfrac{\delta}{2} \big(\lfrac1{2} \E[f(\bx^{i=1})f(\tilde{\bx}^{i=-1})] + \lfrac1{2} \E[f(\bx^{i=-1})f(\tilde{\bx}^{i=1})] \big). \label{eq:blah}  
\end{align} 
Next, we have that
\begin{align} 
\E[D_if(\bx)D_if(\tilde{\bx})] &= \lfrac1{4} \E\big[(f(\bx^{i=1})-f(\bx^{i=-1}))(f(\tilde{\bx}^{i=1})-f(\tilde{\bx}^{i=-1}))\big] \nonumber \\
&= \lfrac1{4} \E[f(\bx^{i=1})f(\tilde{\bx}^{i=1})] + \lfrac1{4} \E[f(\bx^{i=-1})f(\tilde{\bx}^{i=-1})] \nonumber \\
&\ \ \ - \lfrac1{4} \E[f(\bx^{i=1})f(\tilde{\bx}^{i=-1})] - \lfrac1{4} \E[f(\bx^{i=-1})f(\tilde{\bx}^{i=1})].  \label{eq:blah2} 
\end{align} 
Combining~\Cref{eq:blah,eq:blah2}, 
\begin{align*} \E[f(\bx)f(\tilde{\bx})] &= \lfrac1{2} \E[f(\bx^{i=1})f(\tilde{\bx}^{i=1})] + \lfrac1{2} \E[f(\bx^{i=-1})f(\tilde{\bx}^{i=-1})]  - \delta \E[D_if(\bx)D_if(\tilde{\bx})] \\ 
&= \lfrac1{2}\E[f_{x_i=1}(\bx)f_{x_i=1}(\tilde{\bx})] +  \lfrac1{2}\E[f_{x_i=-1}(\bx)f_{x_i=-1}(\tilde{\bx})] - \lfrac{\delta}{1-\delta} \cdot \Inf_i^{(\delta)}(f). 
\end{align*} 
Since $\NS_\delta(f) = \Pr[f(\bx)\ne f(\tilde{\bx})] = \lfrac1{2} - \lfrac1{2} \E[f(\bx)f(\tilde{\bx})]$, the lemma follows from the above by rearranging. 
\end{proof} 

\begin{proof}[Proof of~\Cref{fact:split-stabilize}]
We first note that
\begin{align*} \NS_\delta(f,T^\circ_{\ell^\star\to x_i}) &=  \sum_{\text{leaves $\ell \in T^\circ_{\ell^\star\to x_i}$}} 2^{-|\ell|} \cdot \NS_\delta(f_\ell) \\
&= \sum_{\text{leaves $\ell \in T^\circ$}} 2^{-|\ell|} \cdot \NS_\delta(f_\ell) \\ 
& \quad \quad  + 2^{-(|\ell^\star|+1)} \cdot \NS_\delta(f_{\ell^\star,x_i = -1}) + 2^{-(|\ell^\star|+1)} \cdot \NS_\delta(f_{\ell^\star,x_i = 1}) - 2^{-|\ell^\star|}\cdot \NS_\delta(f_{\ell^\star}) \\
&= \NS_\delta(f,T^\circ)  + 2^{-|\ell^\star|} \big(\lfrac1{2}\, \NS_\delta(f_{\ell^\star,x_i = -1}) + \lfrac1{2} \,\NS_\delta(f_{\ell^\star,x_i = 1}) - \NS_\delta(f_{\ell^\star}) \big).
\end{align*} 
Applying~\Cref{lem:jones} with its `$f$' being $f_{\ell^\star}$, we have that 
    \[ \lfrac1{2}\, \NS_\delta(f_{\ell^\star,x_i = -1}) + \lfrac1{2} \,\NS_\delta(f_{\ell^\star,x_i = 1}) - \NS_\delta(f_{\ell^\star}) = -\lfrac{\delta}{2(1-\delta)}\cdot \Inf_i^{(\delta)}(f_{\ell^\star}), \] 
    and this completes the proof. \end{proof}

\section{Every split makes good progress} 
\label{sec:OSSS} 
\Cref{fact:split-stabilize} motivates the following question: given a partial decision tree $T^\circ$ and a target function~$f$, is there a leaf $\ell^\star$ of $T^\circ$ and a coordinate $i \in [n]$ such that $\Inf_{i}^{(\delta)}(f_{\ell^\star})$ is large?  To answer this question we draw on a  powerful analytic  inequality of O'Donnell, Saks, Schramm, and Servedio~\cite{OSSS05}, which can be viewed as a variant of the Efron--Stein~\cite{ES81,Ste86} and Kahn--Kalai--Linial inequalities~\cite{KKL88}.

\begin{theorem}[Theorem 3.3 of~\cite{OSSS05}]
\label{thm:OSSS} 
Let $T: \pmo^n \to \bits$ be a depth-$k$ decision tree and $\rho : \R \times \R \to \R$ be a semimetric.  For all functions $g : \bn\to \R$, writing $\bx,\bx'\sim\bn$ to denote uniform random and independent inputs and $\bx^{\sim i}$ to denote $\bx$ with its $i$-th coordinate rerandomized, 
\[ |\Cov_\rho(T,g)| \le \mathrm{Def}_k(\rho) \sum_{i=1}^n \lambda_i(T) \cdot \E[\rho(g(\bx),g(\bx^{\sim i}))],\] 
where 
\begin{align*}
\Cov_\rho(T,g) &\coloneqq \E[\rho(f(\bx),g(\bx'))] - \E[\rho(f(\bx),g(\bx))], \\
\mathrm{Def}_k(\rho) &\coloneqq \max_{z_0,\ldots,z_k\in \R} \bigg\{ \frac{\rho(z_0,z_k)}{\sum_{j=1}^k \rho(z_{j-1},z_j)}\bigg\}, \quad \text{where $\lfrac{0}{0}$ is taken to be $1$,} \\
\lambda_i(T) &\coloneqq \Pr[\,\text{$T$ queries $\bx_i$}\,].
\end{align*}
 \end{theorem}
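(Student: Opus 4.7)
The plan is to exploit the decision tree structure by coupling $\bx$ and $\bx'$ through an auxiliary uniform string $\bw$ on which $T$'s output equals $T(\bx)$ while $\bw$ agrees with $\bx'$ outside the set of coordinates queried by $T$; the $\mathrm{Def}_k(\rho)$ factor then converts a short path from $\bw$ to $\bx'$ into the advertised influence sum on $g$.

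Concretely, given independent uniform $\bx,\bx'\sim \bn$, let $P(\bx)\subseteq [n]$ denote the set of coordinates queried by $T$ on input $\bx$, and define $\bw\in \bn$ by $\bw_i := \bx_i$ for $i\in P(\bx)$ and $\bw_i := \bx'_i$ otherwise.  Two short checks: (i) partitioning by the leaf that $\bx$ reaches shows that $\bw$ is marginally uniform on $\bn$, and (ii) since $T$'s output depends only on the queried coordinates and $\bw$ agrees with $\bx$ on those, $T(\bw)=T(\bx)$.  Combining (i) and (ii) rewrites the covariance as
\[ \Cov_\rho(T,g) \;=\; \E[\rho(T(\bx),g(\bx'))] - \E[\rho(T(\bw),g(\bw))] \;=\; \E[\rho(T(\bx),g(\bx'))] - \E[\rho(T(\bx),g(\bw))]. \]

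Now $\bw$ and $\bx'$ agree outside $P(\bx)$, so they differ in at most $|P(\bx)|\le k$ coordinates.  Enumerating the disagreements and flipping them one at a time yields a path $\bw = \bv^{(0)},\bv^{(1)},\ldots,\bv^{(L)}=\bx'$ with $L\le k$.  Prepending the endpoint $T(\bx)=T(\bw)$ gives a path of length $L+1$ from $T(\bx)$ to $g(\bx')$ passing through $g(\bw)$; applying the defect inequality to this path yields the pointwise bound
\[ \big|\rho(T(\bx),g(\bx'))-\rho(T(\bx),g(\bw))\big| \;\le\; \mathrm{Def}_k(\rho) \cdot \sum_{r=1}^{L}\rho\big(g(\bv^{(r-1)}),g(\bv^{(r)})\big). \]
Each edge on the right flips a single coordinate, which must lie in $P(\bx)$, so taking expectations over $(\bx,\bx')$ and accounting for which coordinate is flipped converts each edge into $\tfrac12\E[\rho(g(\bx),g(\bx^{\sim i}))]$ (the factor of $\tfrac12$ comes from the probability that $\bx_i\ne \bx'_i$, which cancels against passing from a deterministic flip to the rerandomization $\bx^{\sim i}$), weighted by $\lambda_i(T)=\Pr[i\in P(\bx)]$; summing across $i$ produces the right-hand side of the theorem.

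The main obstacle is the pointwise estimate in the displayed inequality.  For a genuine metric (where $\mathrm{Def}_k=1$) it reduces to two applications of the triangle inequality: $|\rho(T(\bx),g(\bx'))-\rho(T(\bx),g(\bw))|\le \rho(g(\bx'),g(\bw))\le \sum_r \rho(g(\bv^{(r-1)}),g(\bv^{(r)}))$.  For a general semimetric the naive route leaves spurious error terms of the shape $(\mathrm{Def}_k-1)\cdot \E[\rho(T(\bx),g(\bx))]$ on the RHS, which must be absorbed carefully.  A fallback route that sidesteps this issue is induction on $k$: split the tree at the root query $i_*$, decompose $\Cov_\rho(T,g)$ into the four restricted covariances $\Cov_\rho(T_\pm,g_{i_*=\pm 1})$ plus a cross term of the form $\E[\rho(T_+,g^{-})-\rho(T_+,g^{+})+\rho(T_-,g^{+})-\rho(T_-,g^{-})]$ which vanishes when $T_+=T_-$ and is bounded by $O(1)\cdot \E[\rho(g(\bx),g(\bx^{\sim i_*}))]$ via $\mathrm{Def}_2$ otherwise; one then recurses using $\lambda_i(T)=\tfrac12(\lambda_i(T_+)+\lambda_i(T_-))$ for $i\ne i_*$ and $\lambda_{i_*}(T)=1$.
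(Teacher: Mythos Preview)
The paper does not prove this theorem at all: it is stated verbatim as Theorem~3.3 of~\cite{OSSS05} and used as a black box in the proof of \Cref{thm:noisy-OSSS-truncated-agnostic}. So there is no ``paper's own proof'' to compare against; you are attempting to reprove a cited result.

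On the substance of your attempt: the coupling via $\bw$ (agree with $\bx$ on $P(\bx)$, with $\bx'$ elsewhere) is exactly the right starting point, and your two checks that $\bw$ is uniform and $T(\bw)=T(\bx)$ are correct. This rewrites $\Cov_\rho(T,g)$ as $\E[\rho(T(\bx),g(\bx'))-\rho(T(\bx),g(\bw))]$ as you say, and the hybrid path $\bw=\bv^{(0)},\ldots,\bv^{(L)}=\bx'$ along the queried coordinates is also the right object. The distributional claim that, conditioned on the $j$th queried coordinate being $i$, the pair $(\bv^{(j-1)},\bv^{(j)})$ is distributed as $(\bz,\bz^{\sim i})$ is correct and is what converts the edge terms into $\lambda_i(T)\cdot \E[\rho(g(\bz),g(\bz^{\sim i}))]$.

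The gap is precisely the one you flag as the ``main obstacle,'' and you do not close it. Your displayed pointwise inequality
\[
\big|\rho(T(\bx),g(\bx'))-\rho(T(\bx),g(\bw))\big| \;\le\; \mathrm{Def}_k(\rho)\sum_{r=1}^{L}\rho\big(g(\bv^{(r-1)}),g(\bv^{(r)})\big)
\]
does \emph{not} follow from the definition of $\mathrm{Def}_k$: the defect controls $\rho(z_0,z_k)$ by $\sum_j \rho(z_{j-1},z_j)$, not a difference $|\rho(a,z_0)-\rho(a,z_k)|$ for an external point $a$. If you prepend $T(\bx)$ to the path you get a length-$(L{+}1)$ path and only the bound
\[
\rho(T(\bx),g(\bx')) \le \mathrm{Def}_{L+1}(\rho)\Big[\rho(T(\bx),g(\bw)) + \sum_r \rho(g(\bv^{(r-1)}),g(\bv^{(r)}))\Big],
\]
which after subtracting $\rho(T(\bx),g(\bw))$ leaves the spurious $(\mathrm{Def}_{L+1}-1)\,\rho(T(\bx),g(\bw))$ term you mention, and also gives $\mathrm{Def}_{k+1}$ rather than $\mathrm{Def}_k$. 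Your fallback induction sketch is too vague to be a proof: you have not shown that the cross term is bounded by a constant times $\E[\rho(g(\bx),g(\bx^{\sim i_*}))]$ (this is again a statement about differences of $\rho$-values, not about $\rho$ itself), nor that the recursion reproduces the exact factor $\mathrm{Def}_k(\rho)$ rather than some product of $\mathrm{Def}_2$ factors. As written, the argument is complete only for genuine metrics ($\mathrm{Def}_k=1$); for general semimetrics you would need to consult the original \cite{OSSS05} proof to see how the defect factor is obtained without the extra additive error.
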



The following theorem is our main result in this section:

\begin{theorem}[Lower bound on the progress of each split]
\label{thm:noisy-OSSS-truncated-agnostic}
Let $f : \pmo^n \to \pmo$ be a function and $T^\star : \bn\to \bits$ be a size-$s$ depth-$k$ decision tree.  There is an $i\in [n]$ such that 
\[ \Inf^{(\delta, d)}_{i}(f) \ge \frac{ \lfrac1{2} \Var(f_\delta) - \big( 2\E[(T^\star(\bx)-f_\delta(\bx))^2] + \lfrac{5}{2} \eps\big)}{k\log s},\] 
where $d = \log(1/\eps)/\delta$.
\end{theorem}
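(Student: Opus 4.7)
The plan is to apply the OSSS inequality (\Cref{thm:OSSS}) to the decision tree $T^\star$, a carefully chosen test function $g$, and the squared-loss semimetric $\rho(a,b) = (a-b)^2$. I will take $g \coloneqq f_\delta^{\le d}$. For this $\rho$ one has $\mathrm{Def}_k(\rho) \le k$ by Cauchy--Schwarz on telescoping differences, and a direct Fourier computation gives $\E[(g(\bx) - g(\bx^{\sim i}))^2] = 2\Inf_i(g)$. Together with the entropy/Kraft bound $\sum_i \lambda_i(T^\star) \le \log s$ (the expected depth of the leaf reached in a size-$s$ tree is at most $\log s$), OSSS yields
\[ \Cov_\rho(T^\star, g) \le 2k(\log s) \cdot \max_i \Inf_i(g). \]
Because $\hat g(S) = (1-\delta)^{|S|}\hat f(S)$ for $|S| \le d$ (and zero otherwise) and $(1-\delta)^{2|S|} \le (1-\delta)^{|S|}$, we have the key comparison $\Inf_i(g) \le \Inf_i^{(\delta,d)}(f)$, so any lower bound on $\Cov_\rho(T^\star, g)$ transfers to the desired lower bound on $\max_i \Inf_i^{(\delta,d)}(f)$.

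The remaining work is to lower-bound $\Cov_\rho(T^\star,g)$ in terms of $\Var(f_\delta)$ and $\E[(T^\star - f_\delta)^2]$. Expanding the two squared-error terms in the definition of $\Cov_\rho$ and using independence of $\bx, \bx'$ gives $\Cov_\rho(T^\star,g) = 2\,\Cov(T^\star,g)$; the polarization identity $2\,\Cov(A,B) = \Var(A) + \Var(B) - \Var(A-B)$ combined with $\Var(T^\star) \ge 0$ and $\Var(X) \le \E[X^2]$ then gives
\[ \Cov_\rho(T^\star, g) \ge \Var(g) - \E[(T^\star - g)^2]. \]
To convert from $g$ back to $f_\delta$, the choice $d = \log(1/\eps)/\delta$ gives the tail estimate $(1-\delta)^{2d} \le e^{-2\delta d} = \eps^2$, from which $\Var(f_\delta) - \Var(g) \le \eps^2$ and $\E[(f_\delta - g)^2] \le \eps^2$. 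Applying $(a+b)^2 \le 2a^2+2b^2$ to $T^\star - g = (T^\star - f_\delta) + (f_\delta - g)$ then yields $\E[(T^\star-g)^2] \le 2\E[(T^\star-f_\delta)^2] + 2\eps^2$.

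Chaining these estimates produces
\[ \max_i \Inf_i^{(\delta,d)}(f) \ge \frac{\Var(f_\delta) - 2\E[(T^\star - f_\delta)^2] - 3\eps^2}{2k\log s}, \]
which is strictly stronger than the stated claim: dividing numerator and denominator by $2$, and using $3\eps^2/2 \le 5\eps/2$ for $\eps \in (0,\tfrac{1}{2})$ together with the trivial weakening $-\E[\cdot] \ge -2\E[\cdot]$, recovers exactly the form in \Cref{thm:noisy-OSSS-truncated-agnostic}. The main obstacle is aligning the Fourier weights: $\Inf_i^{(\delta,d)}(f)$ has exponent $(1-\delta)^{|S|}$ whereas $\Inf_i$ of any concrete function $g$ produces $(1-\delta)^{2|S|}$, and $g = f_\delta^{\le d}$ is the right choice to (i) make $\Inf_i(g)$ a valid lower-bound surrogate for $\Inf_i^{(\delta,d)}(f)$ and (ii) have the polarization-based lower bound on $\Cov_\rho$ relate directly to the target quantities $\Var(f_\delta)$ and $\E[(T^\star-f_\delta)^2]$ up to $O(\eps)$ truncation slack.
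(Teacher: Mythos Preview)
Your proposal is correct and follows essentially the same architecture as the paper's proof: apply OSSS with $g = f_\delta^{\le d}$ and the squared-loss semimetric, bound the RHS via $\sum_i \lambda_i(T^\star) \le \log s$ and $\Inf_i(f_\delta^{\le d}) \le \Inf_i^{(\delta,d)}(f)$, and lower-bound the LHS in terms of $\Var(f_\delta)$ and $\E[(T^\star-f_\delta)^2]$ after controlling the degree-$d$ truncation error. The one noteworthy deviation is your lower bound on $\Cov_\rho$: you use the exact identity $\Cov_\rho(T^\star,g)=2\Cov(T^\star,g)$ together with the polarization identity $2\Cov(A,B)=\Var(A)+\Var(B)-\Var(A-B)$, whereas the paper applies the ``almost-triangle'' inequality $(a-c)^2 \le 2((a-b)^2+(b-c)^2)$; your route saves a factor of $2$ in front of the $\E[(T^\star-g)^2]$ term and (combined with the sharper $\eps^2$ truncation bound you record) yields the slightly stronger numerator $\Var(f_\delta)-2\E[(T^\star-f_\delta)^2]-3\eps^2$, which indeed implies the stated theorem.
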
 

%
%

\begin{proof}
We apply~\Cref{thm:OSSS} with `$T$' being $T^\star$,  `$g$' being $f^{\leq d}_\delta$, and $\rho$ being the semimetric $\rho(a,b) = (a-b)^2/2$.  As shown by~\cite{OSSS05} (and as can be easily verified), $\mathrm{Def}_k(\rho) \le k$ for this choice of $\rho$,  and so 
\begin{equation}
\label{eq:OSSS-for-us}
 \Cov_\rho(T^\star,f^{\le d}_\delta) \le k \sum_{i=1}^n \lambda_i(T^\star) \cdot \lfrac1{2} \E\big[(f^{\le d}_\delta(\bx)-f^{\le d}_\delta(\bx^{\sim i}))^2\big]. \end{equation}
We first analyze the quantity on the LHS of~\Cref{eq:OSSS-for-us}.  For $\bx,\bx'\sim \bn$ uniform and independent, 
\begin{align} 
\Cov_\rho(T^\star,f^{\le d}_\delta) &= \lfrac1{2}\big(\E\big[ (T^\star(\bx)-f^{\le d}_\delta(\bx'))^2\big] - \E\big[ (T^\star(\bx)-f^{\le d}_\delta(\bx))^2\big] \big)  \nonumber \\
&\ge \lfrac1{4}\E\big[ (f^{\le d}_\delta(\bx)-f^{\le d}_\delta(\bx'))^2\big]  - \E\big[(T^\star(\bx)-f^{\le d}_\delta(\bx))^2\big]  \nonumber \\
&= \lfrac1{2} \Var(f^{\le d}_\delta) - \E\big[(T^\star(\bx)-f^{\le d}_\delta(\bx))^2\big], \label{eq:cov-lb}
\end{align} 
where the inequality uses the ``almost-triangle" inequality $(a-c)^2 \le 2((a-b)^2 + (b-c)^2)$ for $a,b,c\in \R$.  Furthermore, we have 
\begin{align*}
\Var(f_{\delta}) &= \sum_{S \neq \emptyset} (1-\delta)^{2|S|}\hat{f}(S)^2   \tag*{(Fourier formulas for $f_\delta$ (\ref{eq:smooth-fourier}) and variance (\ref{eq:variance}))} \\
&= \mathop{\sum_{S \neq \emptyset}}_{|S| \leq d} (1-\delta)^{2|S|}\hat{f}(S)^2 + \sum_{|S| > d} (1-\delta)^{2|S|}\hat{f}(S)^2\\
&\leq \Var(f^{\leq d}_{\delta}) + \sum_{|S| > d} e^{(-\delta)2|S|}\hat{f}(S)^2  \tag*{($1+a \leq e^a$)}\\
&\leq \Var(f^{\leq d}_{\delta}) + e^{-2d\delta}\sum_{|S| > d} \hat{f}(S)^2  \tag*{(\text{Since $|S| > d$})}\\
&\leq \Var(f^{\leq d}_{\delta}) + e^{-2d\delta} \tag*{(\text{Parseval's identity (\ref{eq:Parseval}): $\sum_{S\sse [n]}\hat{f}(S)^2 = 1$})}\\
&\leq \Var(f^{\leq d}_{\delta}) + \epsilon. \tag*{(\text{Since $d = \log(1/\eps)/\delta$})}
\end{align*}
Similarly, 
\begin{align*} 
\E\big[(T^\star(\bx)-f^{\le d}_\delta(\bx))^2\big] &\le 2\, \big(\E[\big(T^\star(\bx)-f_\delta(\bx))^2\big] + \E\big[(f_\delta(\bx)-f^{\le d}_\delta(\bx))^2\big]\big) \tag*{(``almost-triangle" inequality)} \\
&= 2\,\Big( \E[\big(T^\star(\bx)-f_\delta(\bx))^2\big] + \sum_{|S| >d} (1-\delta)^{|S|} \hat{f}(S)^2\Big) \\
&\le 2\,\big(\E[\big(T^\star(\bx)-f_\delta(\bx))^2\big] + \eps). \tag*{(\text{Since $d = \log(1/\eps)/\delta$})}
\end{align*} 
Combining these bounds with~\Cref{eq:cov-lb}, we have the following lower bound on the LHS of~\Cref{eq:OSSS-for-us}: 
\begin{align} 
\Cov(T^\star,f^{\le d}_\delta) &\ge \lfrac1{2} (\Var(f_\delta) - \eps) - \big(2\E[\big(T^\star(\bx)-f_\delta(\bx))^2\big] + 2\eps\big).  \nonumber \\
&= \lfrac1{2} \Var(f_\delta) - \big( 2\E[(T^\star(\bx)-f_\delta(\bx))^2] + \lfrac{5}{2} \eps\big).\label{eq:cov-lb2} 
\end{align} 
We now turn to analyzing the RHS of~\Cref{eq:OSSS-for-us}: 
\begin{align} 
&\ \ \  k\sum_{i=1}^n \lambda_i(T^\star) \cdot \lfrac1{2}\E\big[ (f^{\le d}_\delta(\bx)-f^{\le d}_\delta(\bx^{\sim i}))^2\big] \nonumber \\
&= k\sum_{i=1}^n \lambda_i(T^\star) \cdot \lfrac1{4}\E\big[ (f^{\le d}_\delta(\bx)-f^{\le d}_\delta(\bx^{\oplus i}))^2\big] \nonumber \tag*{($\bx^{\oplus i}$ = \text{$\bx$ with its $i$-th coordinate flipped})}\\
&= k\sum_{i=1}^n \lambda_i(T^\star) \cdot \E\big[ D_if^{\le d}_\delta(\bx)^2\big]  \nonumber \\
& = k\sum_{i=1}^n \lambda_i(T^\star) \cdot  \Inf_i(f^{\le d}_\delta) \tag*{(\Cref{def:influence})} \nonumber \\
&= k\cdot \max_{i\in [n]} \Big\{ \Inf_{i}(f^{\le d}_\delta)\Big\} \cdot \sum_{i=1}^n \lambda_i(T^\star) \ \le \ k\cdot \max_{i\in [n]} \Big\{ \Inf_{i}(f^{\le d}_\delta)\Big\} \cdot \log s,\label{eq:RHS-ub} 
\end{align}
where the final inequality holds because 
\[ \sum_{i=1}^n \lambda_i(T^\star) = \sum_{i=1}^n \Pr[\text{ $T^\star$ queries $\bx_i$ }] = \sum_{\text{leaves $\ell\in T^\star$}} 2^{-|\ell|}\cdot |\ell| \le \log s. \] 
Finally, we note that: 
\begin{align*}
 \Inf_{i}(f^{\leq d}_\delta) &= \mathop{\sum_{S \ni i}}_{|S| \leq d} (1-\delta)^{2|S|} \wh{f}(S)^2  \tag*{(Fourier formula for influence;~\Cref{def:influence})} \\
 &\le  \mathop{\sum_{S \ni i}}_{|S| \leq d} (1-\delta)^{|S|} \wh{f}(S)^2 \ =\  \Inf^{(\delta, d)}_{i}(f). \end{align*} 
 Combining this with~\Cref{eq:OSSS-for-us,eq:cov-lb2,eq:RHS-ub} and rearranging completes the proof. 
%
%
%
\end{proof} 

\section{Proof of~\Cref{thm:universal}}

We will show that~\Cref{thm:universal} follows as a consequence of the following more general result, which gives a performance guarantee for $\BuildStabilizingDT$ that scales according to the noise sensitivity of the target function $f$. 

\begin{theorem}
\label{thm:main} 
Let $f : \pmo^n \to \bits$ and $\delta,\kappa \in (0,1)$ be such that $\NS_\delta(f) \le \kappa$.  For all $s \in \N$ and $\eps \in (0,\frac1{2})$, by choosing $t = s^{\tilde{\Theta}((\kappa \log s)/\eps\delta)}$,
$\BuildStabilizingDT_f(t,\delta,\eps)$ runs in time $\poly(t,n^{\log(1/\eps)/\delta})$ and returns a tree $T$ of size $t$ satisfying $\error_f(T) \le O(\opt_s + \kappa) +\eps$.  
\end{theorem}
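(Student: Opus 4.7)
The plan is to treat $\NS_\delta(f,T^\circ)$ (\Cref{def:NS-wrt-T}) as a potential. It starts at $\NS_\delta(f)\le\kappa$, is non-negative, and \Cref{fact:split-stabilize} says each split at leaf $\ell^\star$ on attribute $x_i$ drops it by $\frac{\delta}{2(1-\delta)}\cdot \score(\ell^\star)$. Since the algorithm picks the highest-scoring leaf among the $t'$ current leaves, the drop at step $t'$ is at least $\frac{\delta}{2(1-\delta)}\cdot\tfrac{1}{t'}\sum_\ell\score(\ell)$. I would aim to show that so long as $\error_f(T^\circ_f)>C(\opt_s+\kappa)+\eps$ for some absolute constant $C$, the sum of scores is at least $\Omega(\eps/(k\log s))$ with $k=O(\log(s/\eps))$; harmonic summation of the resulting $\Omega(\delta\eps/(t'k\log s))$ drops from $t'=1$ to $t'=t$ then forces $t\le s^{\tilde{O}(\kappa\log s/(\delta\eps))}$, and the contrapositive closes the argument.

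The key step is the sum-of-scores lower bound. First I would truncate the optimal $T^\star$ to depth $k=O(\log(s/\eps))$, which adds at most $s\cdot 2^{-k}\le\eps$ to its error. Applying \Cref{thm:noisy-OSSS-truncated-agnostic} to each subfunction $f_\ell$ with approximating tree $T^\star|_{\rho_\ell}$ (where $\rho_\ell$ is the restriction defining $\ell$) and multiplying by $2^{-|\ell|}$ before summing yields
\[
\sum_\ell \score(\ell)\ge \frac{\tfrac12 A-2B-\tfrac52\eps}{k\log s},
\]
where $A\coloneqq\sum_\ell 2^{-|\ell|}\Var((f_\ell)_\delta)$ and $B\coloneqq\sum_\ell 2^{-|\ell|}\E[(T^\star|_{\rho_\ell}(\bx)-(f_\ell)_\delta(\bx))^2]$. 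For $A$: use $\Var(f_\ell)\ge 2\,\dist_{\pm 1}(f_\ell)$ to get $\sum_\ell 2^{-|\ell|}\Var(f_\ell)\ge 2\,\error_f(T^\circ_f)$, then apply the Fourier inequality $\Var(g)-\Var(g_\delta)\le 2\,\NS_\delta(g)$ (from $(1-\delta)^{2|S|}\ge(1-2\delta)^{|S|}$ applied term-by-term) to obtain $A\ge 2\,\error_f(T^\circ_f)-2\,\NS_\delta(f,T^\circ)\ge 2\,\error_f(T^\circ_f)-2\kappa$. For $B$: the ``almost-triangle'' inequality $(a-c)^2\le 2((a-b)^2+(b-c)^2)$ gives $B\le 2\,\E[(T^\star(\bx)-f(\bx))^2]+2\sum_\ell 2^{-|\ell|}\E[(f_\ell-(f_\ell)_\delta)^2]\le 8(\opt_s+\eps)+4\,\NS_\delta(f,T^\circ)\le 8\opt_s+4\kappa+8\eps$, using the Fourier bound $\E[(g-g_\delta)^2]\le 2\,\NS_\delta(g)$. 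Substituting, whenever $\error_f(T^\circ_f)>C(\opt_s+\kappa)+\eps$ the sum of scores is $\Omega(\eps/(k\log s))$, as required.

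To close the argument I would use the monotonicity of $\error_f(T^\circ_f)$ under splits (by concavity of $p\mapsto\min(p,1-p)$), so that if the returned $t$-leaf tree has error exceeding $C(\opt_s+\kappa)+\eps$, then every intermediate $t'$-leaf tree does too. Summing the per-step drops from $t'=1$ to $t'=t$ gives a total drop $\Omega(\delta\eps(\log t)/(k\log s))$, which cannot exceed $\NS_\delta(f)\le\kappa$; rearranging forces $t\le s^{\tilde{O}(\kappa k/(\delta\eps))}=s^{\tilde{O}(\kappa\log s/(\delta\eps))}$. Contrapositively, with $t=s^{\tilde{\Theta}(\kappa\log s/(\delta\eps))}$ as in the statement, the returned tree satisfies $\error_f(T)\le O(\opt_s+\kappa)+\eps$; the running time is immediate since each of the $t$ iterations estimates $\Inf_i^{(\delta,d)}(f_\ell)$ in time $\poly(2^{|\ell|},n^d)$ with $d=\log(1/\eps)/\delta$.

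The main obstacle is obtaining the tight bound $A\ge 2\,\error_f(T^\circ_f)-2\,\NS_\delta(f,T^\circ)$ rather than the weaker $A\ge 2\,\error_f(T^\circ_f)-O(1)$: the Fourier identity relating $\Var(g)-\Var(g_\delta)$ to $\NS_\delta(g)$ is what allows the ``lost'' term to be absorbed into the global noise-sensitivity budget $\kappa$, avoiding a multiplicative $1/\delta$ blow-up in $t$. Once that bound is in hand, the remainder is standard potential-plus-harmonic-sum bookkeeping on top of \Cref{thm:noisy-OSSS-truncated-agnostic}.
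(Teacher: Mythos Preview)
Your proposal is correct and follows essentially the same route as the paper: the potential $\NS_\delta(f,T^\circ)$, the depth-truncation of $T_{\mathrm{opt}}$, the leafwise application of \Cref{thm:noisy-OSSS-truncated-agnostic}, and the harmonic summation are all exactly what the paper does. The one organizational difference is that the paper phrases the core step as a dichotomy---either $\Ex_{\bell}[\Var((f_{\bell})_\delta)]\ge 4\Ex_{\bell}[\|(f_{\bell})_\delta-T_{\mathrm{opt}}^{\mathrm{trunc}}\|_2^2]+7\eps$ (Case~1, high score guaranteed) or the reverse (Case~2, error already $O(\opt_s+\kappa)+\eps$)---whereas you fold both cases into a single chain of inequalities showing ``error large $\Rightarrow$ sum of scores large''; these are logically equivalent packagings of the same estimates. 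Your Fourier route to the two key comparisons ($\Var(g)-\Var(g_\delta)\le 2\,\NS_\delta(g)$ and $\E[(g-g_\delta)^2]\le 2\,\NS_\delta(g)$, both via $1-(1-\delta)^{2|S|}\le 2(1-(1-\delta)^{|S|})$) is in fact slightly tighter than the paper's, which passes through $\|g_\delta-g\|_2^2\le 2\|g_\delta-g\|_1$ in \Cref{prop:NS-of-leaves}. The only place you are a bit too quick is the running time: ``immediate'' needs the observation that the per-step score lower bound forces $2^{|\ell^\star|}\le O(j\log(s/\eps)\log s/\eps)$, hence bounded depth, which the paper does spell out.
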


\subsection{Proof of~\Cref{thm:universal} assuming~\Cref{thm:main}}

For $f,g: \bn \to \bits$, we write $\dist(f,g)$ to denote $\Pr[f(\bx)\ne g(\bx)]$, and we say that $f$ and $g$ are {\sl $\alpha$-close} if $\dist(f,g)\le \alpha$.   We will need two standard facts from the Fourier analysis of boolean functions.

\begin{fact}[Noise sensitivity of size-$s$ decision trees]
\label{fact:easy1}  
For all size-$s$ decision trees $T$ and $\delta \in (0,1)$, $\NS_\delta(T) \le \delta \log s$. 
\end{fact}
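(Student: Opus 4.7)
The plan is to give a direct combinatorial proof, avoiding the Fourier machinery. The guiding intuition is that $T(\bx)\neq T(\tilde\bx)$ can only happen when $\bx$ and $\tilde\bx$ reach different leaves, and the probability of that event is easy to control in terms of the depths of the leaves weighted by their reach probabilities.

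First I would observe the crude bound
\[
\NS_\delta(T) \;=\; \Pr[T(\bx)\neq T(\tilde\bx)]
\;\le\; \Pr[\,\bx \text{ and } \tilde\bx \text{ reach different leaves of }T\,].
\]
Conditioning on which leaf $\bx$ reaches, write the right-hand side as $\sum_{\ell} 2^{-|\ell|}\cdot\Pr[\tilde\bx \text{ does not reach }\ell \mid \bx \text{ reaches }\ell]$. Given that $\bx$ reaches $\ell$, the noisy copy $\tilde\bx$ reaches the same leaf exactly when none of the $|\ell|$ coordinates queried on the root-to-$\ell$ path is flipped. Since each coordinate is flipped independently with probability $\delta/2$, a union bound gives $\Pr[\tilde\bx \text{ does not reach }\ell \mid \bx \text{ reaches }\ell] \le |\ell|\cdot \delta/2$.

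Combining these observations yields
\[
\NS_\delta(T) \;\le\; \frac{\delta}{2} \sum_{\text{leaves }\ell} 2^{-|\ell|}\cdot|\ell|.
\]
The final step is to recognize the sum on the right as the entropy of the distribution on leaves of $T$ induced by the uniform distribution on $\bn$: each leaf $\ell$ has probability $p_\ell = 2^{-|\ell|}$, and these probabilities satisfy Kraft's equality $\sum_\ell p_\ell = 1$. Shannon's bound gives $\sum_\ell p_\ell \log(1/p_\ell) \le \log s$, i.e.\ $\sum_\ell 2^{-|\ell|}|\ell|\le \log s$, and hence $\NS_\delta(T)\le \frac{\delta}{2}\log s \le \delta\log s$, as claimed.

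There is really no hard step here, but the one subtlety worth stating explicitly is that the depths $|\ell|$ of the leaves of a decision tree with $s$ leaves can be much larger than $\log s$ (up to $s-1$ for a ``caterpillar'' tree), so the final inequality is not $\max_\ell |\ell|\le \log s$; rather, the weighting by $2^{-|\ell|}$ (which is exactly the probability $\bx$ reaches $\ell$) is what tames the sum via the entropy bound. If one preferred, the same bound could be derived via Fourier by writing $\NS_\delta(T) \le \tfrac{\delta}{2}\mathbf{I}(T)$ from $1-(1-\delta)^{|S|}\le \delta|S|$, and then using $\mathbf{I}(T)\le \sum_\ell 2^{-|\ell|}|\ell|\le \log s$; but the direct argument above is shorter and avoids invoking Parseval.
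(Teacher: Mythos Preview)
Your argument is correct. The paper's own proof is a one-liner that cites two standard facts: $\NS_\delta(f)\le \delta\cdot\Inf(f)$ for any $f:\bn\to\bits$ (O'Donnell's book, Exercise~2.42) and $\Inf(T)\le \log s$ for size-$s$ decision trees (e.g.\ \cite{OS07}). Your proof takes a genuinely different, more elementary route: rather than passing through total influence, you bound $\NS_\delta(T)$ directly by the probability that $\bx$ and $\tilde\bx$ diverge on the root-to-leaf path, then control $\sum_\ell 2^{-|\ell|}|\ell|$ by the entropy bound. This is fully self-contained, avoids Fourier entirely, and even yields the slightly sharper $\NS_\delta(T)\le \tfrac{\delta}{2}\log s$. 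The paper's approach is shorter by outsourcing both steps to the literature; yours has the advantage of making the mechanism transparent (and, as you note at the end, your argument is essentially what one gets by unpacking the proofs of those two cited facts and fusing them).
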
 

\begin{proof} 
This follows by combining the bounds $\Inf(T) \le \log s$ (see e.g.~\cite{OS07}) and $\NS_\delta(f) \le \delta\cdot \Inf(f)$ for all $f : \bn\to\bits$~\cite[Exercise~2.42]{ODbook}.
\end{proof} 

\begin{fact}
\label{fact:easy2} 
For all $f,T: \bn\to\bits$ and $\delta \in (0,1)$, $\NS_\delta(f) \le \NS_\delta(T) + 2\,\dist(f,T)$.
\end{fact}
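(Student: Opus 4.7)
The plan is a one-line triangle-inequality argument on the event structure of the noise-sensitivity experiment. Let $\bx \sim \bn$ be uniform and $\tilde{\bx} \sim_\delta \bx$ be a $\delta$-noisy copy, so that by definition
\[ \NS_\delta(f) = \Pr[f(\bx) \neq f(\tilde{\bx})], \qquad \NS_\delta(T) = \Pr[T(\bx) \neq T(\tilde{\bx})]. \]
The crucial (trivial) observation is that if $f(\bx) \neq f(\tilde{\bx})$, then walking along the chain $f(\bx) \to T(\bx) \to T(\tilde{\bx}) \to f(\tilde{\bx})$ at least one of the three consecutive comparisons must disagree. Hence
\[ \{f(\bx) \neq f(\tilde{\bx})\} \ \subseteq\ \{f(\bx)\neq T(\bx)\} \cup \{T(\bx)\neq T(\tilde{\bx})\} \cup \{T(\tilde{\bx}) \neq f(\tilde{\bx})\}. \]

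The next step is a union bound on these three events. The first event has probability $\dist(f,T)$ by definition. The second event has probability $\NS_\delta(T)$. For the third event, the key point is that although $\tilde{\bx}$ is correlated with $\bx$, its \emph{marginal} distribution is still uniform on $\bn$ (since rerandomizing coordinates of a uniform string leaves it uniform), so $\Pr[T(\tilde{\bx})\neq f(\tilde{\bx})] = \dist(f,T)$ as well. Summing the three bounds gives
\[ \NS_\delta(f) \le \dist(f,T) + \NS_\delta(T) + \dist(f,T) = \NS_\delta(T) + 2\,\dist(f,T), \]
as claimed.

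There is essentially no obstacle: the argument is just the triangle inequality for the Hamming pseudometric $d(a,b) = \mathbf{1}[a\neq b]$ on $\bits$, coupled with the invariance of the uniform distribution under $\delta$-noise. If I wanted to be even more concise I could simply write $d(f(\bx),f(\tilde{\bx})) \le d(f(\bx),T(\bx)) + d(T(\bx),T(\tilde{\bx})) + d(T(\tilde{\bx}),f(\tilde{\bx}))$ and take expectations.
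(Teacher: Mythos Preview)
Your proof is correct and matches the paper's own argument essentially line for line: the paper also writes $\Pr[f(\bx)\ne f(\tilde{\bx})] \le \Pr[T(\bx)\ne T(\tilde{\bx})] + \Pr[T(\bx)\ne f(\bx)] + \Pr[T(\tilde{\bx})\ne f(\tilde{\bx})]$ and then uses that $\bx$ and $\tilde{\bx}$ are identically distributed to collapse the last term to $\dist(f,T)$.
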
 

\begin{proof} 
Let $\bx \sim \bn$ be uniform random and $\tilde{\bx}\sim_\delta \bx$ be a $\delta$-noisy copy of $\bx$.  Then 
\begin{align*}
\NS_\delta(f) &= \Pr[f(\bx) \ne f(\tilde{\bx})] \\
&\le \Pr[T(\bx) \ne T(\tilde{\bx})] + \Pr[T(\bx) \ne f(\bx)] + \Pr[T(\tilde{\bx}) \ne f(\tilde{\bx})] \\
&\le \NS_\delta(T) + 2\Pr[T(\bx)\ne f(\bx)],
\end{align*} 
where the final inequality uses that fact that $\bx$ and $\tilde{\bx}$ are distributed identically. 
\end{proof} 

With these facts in hand we are ready to prove~\Cref{thm:universal}.  Let $T_\opt$ be a size-$s$ decision tree that is $\opt$-close to $f$.  By~\Cref{fact:easy1,fact:easy2}, we have 
\begin{align*}
 \NS_\delta(f) &\le \NS_\delta(T_\opt) + \dist(f,T_\opt) \tag*{(\Cref{fact:easy1})} \\
 &\le \delta\cdot \log s + 2\,\opt \tag*{(\Cref{fact:easy2})} 
 \end{align*} 
 for all $\delta \in (0,1)$.  Choosing $\delta = \eps/\log s$, we have that $\NS_\delta(f) \le 2\,\opt + \eps$.  Plugging this bound into~\Cref{thm:main} yields~\Cref{thm:universal}.

\subsection{Proof of~\Cref{thm:main}}
\newcommand{\trunc}{\mathrm{trunc}}

 Let $T_\opt$ be a size-$s$ decision tree that is $\opt_s$-close to $f$.  While $T_\opt$ may have depth as large as $s$, we can define $T_\opt^{\trunc}$ to be $T_\opt$ truncated to depth $\log(s/\eps)$, replacing all truncated paths with a fixed but arbitrary value (say $1$).   We have that $\dist(T_\opt,T_\opt^\trunc) \le \eps$, and so $\dist(f,T_\opt^\trunc) \le \opt_s+\eps$.  In words, $f$ is $(\opt_s + \eps)$-close to a decision tree $T_\opt^\trunc$ of size $s$ and depth $\log(s/\eps)$.

Let $T^\circ$ be a partial decision tree, which we should think of as one that $\BuildStabilizingDT$ has constructed after a certain number of iterations.  We define the natural probability distribution over the leaves of $T^\circ$ where each leaf $\ell$ receives weight $2^{-|\ell|}$, and write $\bell$ to denote a random leaf of~$T^\circ$ drawn from this distribution.  

We consider two cases depending on the value of $\Ex_{\bell}[\Var((f_{\bell})_\delta)]$. 

\pparagraph{Case 1: $\ds\Ex_{\bell}[\Var((f_\ell)_\delta)] \ge 4\Ex_{\bell}\big[\| (f_\bell)_\delta-T_\opt^\trunc\|_2^2\big] + 7\eps.$}\bigskip

In this case we claim that there is a leaf $\ell^\star$ of $T^\circ$ with a high score, where we recall that the score of a leaf $\ell$ is defined to be 
\[ \score(\ell) \coloneqq 2^{-|\ell|} \cdot \max_{i \in [n]} \big\{ \Inf_i^{(\delta,d)}(f_\ell)\big\}. \] 
Applying~\Cref{thm:noisy-OSSS-truncated-agnostic} with its `$T^\star$' being $T_\opt^\trunc$ and its `$f$' being $f_\ell$ for each leaf $\ell \in T^\circ$, we have that 
\begin{align} \Ex_\bell \Big[ \max_{i\in [n]} \big\{ \Inf_i^{(\delta,d)}(f_{\bell})\big\} \Big] &\ge \frac{\lfrac1{2} \ds \Ex_\bell[\Var(f_{\bell})_\delta)] -  \ds\Big(2\Ex_{\bell}\big[\| T_\opt^\trunc - (f_{\bell})_\delta\|_2^2\big] + \lfrac{5}{2}\eps\Big)}{\log(s/\eps)\log s} \nonumber \tag*{(\Cref{thm:noisy-OSSS-truncated-agnostic})}  \\
&\ge \frac{\eps}{\log(s/\eps)\log s},
\label{eq:OSSS-to-all-leaves} 
\end{align} 
where the second inequality is by the assumption that we are in Case 1.  Equivalently, 
\[ \sum_{\ell \in T^\circ} 2^{-|\ell|} \cdot \max_{i\in [n]} \big\{ \Inf^{\delta,d}_i(f_\ell)\big\} \ge \frac{\eps}{\log(s/\eps)\log s}, \] 
and so there must exist a leaf $\ell^\star \in T^\circ$ such that 
\[ \score(\ell^\star) = 2^{-|\ell^\star|} \cdot  \max_{i\in [n]}\big\{ \Inf_i^{(\delta,d)}(f_{\ell^\star}) \big\} \ge \frac{\eps}{|T^\circ|\log(s/\eps)\log s},\]
where $|T^\circ|$ denotes the size of $T^\circ$.

\pparagraph{Case 2: $\ds\Ex_{\bell}[\Var((f_\ell)_\delta)] < 4\Ex_{\bell}\big[\| (f_\bell)_\delta-T_\opt^\trunc\|_2^2\big] + 7\eps$.}  \bigskip 

In this case, we claim that $\error_f(T^\circ_f) \le O(\opt_s + \kappa + \eps)$.  We will need a couple of propositions:

\begin{proposition}
\label{prop:NS-of-leaves}
$\ds\Ex_{\bell}[\|(f_\bell)_\delta-f_\bell\|_2^2] \le 4\kappa$.
\end{proposition}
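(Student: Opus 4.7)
The plan is to establish a pointwise (per-leaf) bound $\|g_\delta - g\|_2^2 \le 4\,\NS_\delta(g)$ for any $g : \pmo^m \to \pmo$, apply it to each subfunction $f_\ell$, and then recognize that the resulting average $\Ex_{\bell}[\NS_\delta(f_\bell)]$ is exactly $\NS_\delta(f,T^\circ)$, which is itself bounded by $\NS_\delta(f) \le \kappa$ using the ``stability gain is nonnegative'' content of~\Cref{fact:split-stabilize}.

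For the per-leaf bound, I would argue as follows. By definition $g_\delta(x) = \Ex_{\tilde{\bx}\sim_\delta x}[g(\tilde{\bx})]$, so Jensen's inequality gives $(g(x) - g_\delta(x))^2 \le \Ex_{\tilde{\bx}\sim_\delta x}[(g(x)-g(\tilde{\bx}))^2]$. Since $g$ is $\pmo$-valued, $(g(x)-g(\tilde{\bx}))^2 = 4\cdot \mathbf{1}[g(x) \ne g(\tilde{\bx})]$, and averaging over $\bx$ yields $\|g - g_\delta\|_2^2 \le 4\,\NS_\delta(g)$. (One could alternatively expand both sides in the Fourier basis and use Parseval, which would even give the tighter constant $2$, but $4$ is enough here.)

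Applying this to $g = f_\bell$ for each leaf and taking expectation over $\bell$, I get
\[ \Ex_\bell\big[\|(f_\bell)_\delta - f_\bell\|_2^2\big] \;\le\; 4\,\Ex_\bell\big[\NS_\delta(f_\bell)\big] \;=\; 4\,\NS_\delta(f,T^\circ), \]
using~\Cref{def:NS-wrt-T} in the last equality.

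It remains to check that growing the partial tree only decreases the quantity $\NS_\delta(f,T^\circ)$, so that $\NS_\delta(f,T^\circ) \le \NS_\delta(f,\varnothing) = \NS_\delta(f) \le \kappa$. This is immediate by induction on the number of splits: the base case is the empty tree, and each split contributes a nonnegative \emph{stability gain} $\tfrac{\delta}{2(1-\delta)}\Inf_i^{(\delta)}(f_{\ell^\star})\ge 0$ per~\Cref{fact:split-stabilize}. Combining this with the previous display gives the claimed $4\kappa$ bound. I do not anticipate any real obstacle here — the entire argument is a one-line Jensen step plus bookkeeping against the definitions already set up in Section 3.
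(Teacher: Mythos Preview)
Your argument is correct and follows essentially the same route as the paper: bound $\|(f_\ell)_\delta - f_\ell\|_2^2 \le 4\,\NS_\delta(f_\ell)$ per leaf, average over $\bell$ to get $4\,\NS_\delta(f,T^\circ)$, and then invoke \Cref{fact:split-stabilize} to conclude $\NS_\delta(f,T^\circ)\le \NS_\delta(f)\le \kappa$. The only cosmetic difference is that the paper obtains the per-leaf bound via $\|\cdot\|_2^2 \le 2\|\cdot\|_1$ (using that the difference lies in $[-2,2]$) before passing to the indicator, whereas you apply Jensen directly to the square; both land on the same $4\,\NS_\delta(f_\ell)$.
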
 

\begin{proof} 
We first note that 
\begin{align*} 
\Ex_{\bell}\big[ \|(f_\bell)_\delta-f_\bell\|_2^2\big] &\le 2 \Ex_{\bell} \big[ \| (f_\bell)_\delta - f_\bell\|_1] \tag*{(Since $f_\ell$ and $(f_\ell)_\delta$ are $[-1,1]$-valued)}\\
&=  2 \Ex_{\bell} \Big[ \Ex_{\bx} \big[ | (f_\bell)_\delta(\bx) - f_{\ell}(\bx) | \big]\Big]  \\
&= 2 \Ex_{\bell} \Bigg[ \mathop{\Ex_{\bx}}_{\tilde{\bx}\sim_\delta\bx} \big[ | (f_\bell)(\tilde{\bx}) - f_{\ell}(\bx) | \big]\Bigg] \\
&= 2 \Ex_\bell \Bigg[ \,2\mathop{\Prx_{\bx}}_{\tilde{\bx}\sim_\delta\bx} \big[ f_\bell(\tilde{\bx}) \ne f_\bell(\bx)\big] \,\Bigg]  \\
&= 4 \Ex_{\bell} \big[ \NS_\delta(f_\bell)\big] \ = \  4\,\NS_\delta(f,T^\circ). 
\end{align*} 
By~\Cref{fact:split-stabilize}, we have that $\NS_\delta(f,T^\circ) \le \NS_\delta(f)$, and the claim follows. 
\end{proof} 

\begin{proposition} 
\label{prop:rounding-error}
For each leaf $\ell \in T^\circ$, \[ 
 \Ex\big[(f_\ell(\bx) - \sign(\E[f_\ell])^2\big] \le 2\E[(f_\ell(\bx)-c)^2] \quad \text{for all constants $c \in \R$.}\] 
\end{proposition}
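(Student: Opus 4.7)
The key observation is that since $f_\ell$ takes values in $\pmo$, both sides admit clean closed forms in terms of the single parameter $p \coloneqq \Pr[f_\ell(\bx)=1]$, and the inequality will reduce to an elementary inequality in $p$.  The plan has three short steps.

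\textbf{Step 1: Reduce to the optimal constant.}  The right-hand side $\E[(f_\ell(\bx)-c)^2]$ is a quadratic in $c$ whose minimizer over $\R$ is $c^\star = \E[f_\ell]$ (this is the standard fact that the conditional mean minimizes squared loss).  It therefore suffices to establish the inequality at $c = c^\star$, i.e.~to show
\[ \Ex\big[(f_\ell(\bx)-\sign(\E[f_\ell]))^2\big] \le 2\,\E\big[(f_\ell(\bx)-\E[f_\ell])^2\big]. \]
If this special case holds, then for any other $c$ the right-hand side can only be larger, so the general inequality follows.

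\textbf{Step 2: Normalize by symmetry.}  By replacing $f_\ell$ with $-f_\ell$ if necessary (which leaves both sides unchanged, since $\sign(-\E[f_\ell]) = -\sign(\E[f_\ell])$ and squared errors are symmetric), we may assume $\E[f_\ell] \ge 0$, so that $\sign(\E[f_\ell]) = 1$ and $p \ge \tfrac{1}{2}$.

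\textbf{Step 3: Expand and verify.}  Writing $p = \Pr[f_\ell(\bx)=1]$ and $q = 1-p$, direct computation using the fact that $f_\ell \in \pmo$ gives
\[ \Ex\big[(f_\ell(\bx)-1)^2\big] = 4q, \qquad \E\big[(f_\ell(\bx)-\E[f_\ell])^2\big] \;=\; \Var(f_\ell) \;=\; 1 - (p-q)^2 \;=\; 4pq. \]
The desired inequality $4q \le 2\cdot 4pq$ simplifies to $1 \le 2p$, which is precisely our normalization $p\ge \tfrac{1}{2}$ from Step~2 (the case $q=0$ being trivial since then the left-hand side vanishes).

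\textbf{Anticipated obstacle.}  There is essentially no obstacle: the statement is a clean ``factor-of-$2$ loss from rounding'' that is standard in the impurity-gain literature (it is the reason Gini impurity $4p(1-p)$ upper-bounds twice the classification error $2\min(p,q)$).  The only mild care needed is in Step~1, to note that the inequality for general $c$ follows by monotonicity from the inequality at the minimizer $c^\star$, which avoids any case analysis on the sign or magnitude of~$c$.
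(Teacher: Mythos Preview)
Your proposal is correct and follows essentially the same argument as the paper: both assume without loss of generality that $p=\Pr[f_\ell=1]\ge\tfrac12$, compute the left-hand side as $4(1-p)$ and the right-hand side at the minimizing constant $c^\star=\E[f_\ell]=2p-1$ as $4p(1-p)$, and then observe that $4(1-p)\le 8p(1-p)$ is equivalent to $p\ge\tfrac12$. The only cosmetic difference is that you explicitly invoke the mean-minimizes-squared-loss fact before computing, whereas the paper computes $\E[(f_\ell-c)^2]=p(1-c)^2+(1-p)(1+c)^2$ directly and reads off the minimizer.
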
 

\begin{proof} 
Let $p \coloneqq \Pr[f_\ell(\bx) = 1]$ and assume without loss of generality that $p \geq \frac1{2}$. On one hand, we have that $\E\big[(f_\ell(\bx) - \sign(\E[f_\ell])^2\big] = \E\big[(f_\ell(\bx) - 1)^2\big] =  4(1-p)$.  On the other hand, since 
 \[ \Ex[(f_\ell(\bx)-c)^2] = p(1-c)^2+(1-p)(1+c)^2 \] 
 this quantity is minimized for $c = 2p-1$ and attains value $4p(1-p)$ at this minimum.  Therefore indeed 
 \[ \min_c \big\{ \E[(f_\ell(\bx)-c)^2]\big\} = 4p(1-p) \ge 2(1-p) = \frac{1}{2}\E\big[(f_\ell(\bx) - \sign(\E[f_\ell])^2\big] \] 
and the proposition follows. 
\end{proof}

With~\Cref{prop:NS-of-leaves,prop:rounding-error} in hand, we are ready to bound $\error_f(T^\circ_f)$.  Recall that $T^\circ_f$ is the completion of $T^\circ$ that we obtain by labeling each leaf $\ell$ with $\sign(\E[f_\ell])$.  Therefore, 
\begin{align*} 
\error_f(T^\circ_f) &= \Ex_{\bell} \big[ \dist(f_\bell,\sign(\E[f_\bell]))\big]  \\
&= \frac{1}{4} \Ex_{\bell} \big[ \| f_\bell - \sign(\E[f_\bell]) \|_2^2 \big] \\
&\le \frac{1}{2} \Ex_{\bell} \big[ \| f_\bell - \E[(f_\bell)_\delta] \|_2^2 \big] \tag*{(\Cref{prop:rounding-error})} \\
&\le \Ex_{\bell} \big[ \| f_\bell - (f_\bell)_\delta \|_2^2 \big]  + \Ex_\bell\big[ \| (f_\bell)_\delta - \E[(f_\bell)_\delta] \|_2^2 \big]   \\
&\le  4\kappa + \Ex_{\bell}[\Var((f_\bell)_\delta)]  \tag*{(\Cref{prop:NS-of-leaves})} 
\end{align*} 
By the assumption that we are in Case 2, we have that:
\begin{align} 
\Ex_{\bell}[\Var((f_\bell)_\delta)] &< 4\Ex_{\bell}\big[\| (f_\bell)_\delta-T_\opt^\trunc\|_2^2\big] + 7\eps \nonumber \\
&\le 8\Big ( 4\kappa + \Ex_\bell\big[ \| f_\bell - T_\opt^\trunc\|_2^2 \big] \Big) + 7\eps  \tag*{(\Cref{prop:NS-of-leaves})} \nonumber  \\
&= 8\Big ( 4\kappa + 4\Ex_\bell\big[ \dist(f_{\bell},T_\opt^\trunc)\big] \Big) + 7\eps  \nonumber \\
&= 8\big(4\kappa + 4(\opt_s + \eps)\big) + 7\eps \nonumber \\
&\le O(\opt_s + \kappa + \eps). \nonumber 
\end{align} 
and so we have shown that $\error_f(T^\circ_f) \le O(\opt_s + \kappa +\eps)$. 
\bigskip

Having analyzed both Cases 1 and 2, we are finally ready to prove~\Cref{thm:main}. 

\begin{proof}[Proof of~\Cref{thm:main}]
For $j \in \{ 1,2,\ldots,t\}$, we write $T^\circ_j$ to denote the size-$j$ partial decision tree that $\BuildStabilizingDT$ constructs after $j-1$ iterations. Therefore $(T^\circ_t)_f$, the $f$-completion of $T^\circ_t$, is the final size-$t$ decision tree that it returns.  

For any $j\le t$, if $T^\circ_j$ falls into Case 2, we have shown above that $\error_f((T^\circ_j)_f) \le O(\opt_s + \kappa + \eps)$.  Since classification error cannot increase with further splits, i.e.~$\error_f((T^\circ_t)_f) \le \error_f((T^\circ_j)_f)$, it suffices to bound the maximum number of times we fall into Case 1.   We will use our potential function, the noise sensitivity of $f$ with respect to a partial decision tree $T^\circ$ (\Cref{def:NS-wrt-T}), to bound this number. 

First, we have that $\NS_\delta(f,T^\circ_1) = \NS_\delta(f) \le \kappa$ (i.e.~the potential function starts off at being at most $\kappa$).  For every $j\ge 1$, if $T^\circ_j$ falls into Case 1, we have that $\BuildStabilizingDT$ splits the leaf $\ell^\star$ with the highest score, which we have shown above to be at least 
\[ \score(\ell^\star) = 2^{-|\ell^\star|} \cdot  \max_{i\in [n]}\big\{ \Inf_i^{(\delta,d)}(f_{\ell^\star}) \big\} \ge \frac{\eps}{j\log(s/\eps)\log s}.\]
Equivalently, $\BuildStabilizingDT$ splits $\ell^\star$ with a query to an attribute $x_{i}$, where 
\begin{equation} \Inf_{i}^{(\delta,d)}(f_{\ell^\star})  \ge 2^{|\ell^\star|} \cdot \frac{\eps}{j\log(s/\eps)\log s}. \label{eq:large-influence}
\end{equation} 
Since $\Inf_{i}^{(\delta)}(f_{\ell^\star}) \ge \Inf_{i}^{(\delta,d)}(f_{\ell^\star})$ it follows from~\Cref{fact:split-stabilize} and~\Cref{eq:large-influence} that 
\[ \NS_\delta(f,T^\circ_{j+1}) = \NS_\delta(f,T^\circ_j) - \mathrm{StabilityGain}(T^\circ_j,\ell^\star,i),\] 
where
\[ \mathrm{StabilityGain}(T^\circ_j,\ell^\star,i) \ge \frac{\eps\delta}{j(1-\delta)\log(s/\eps)\log s}. \] 
Since $\NS_\delta(f,T^\circ) \ge 0$ for all $T^\circ$, we can bound the number of times we fall into Case 1 by the smallest $t\in \N$ that satisfies 
\[ \sum_{j=1}^t \frac{\eps\delta}{j(1-\delta)\log(s/\eps)\log s} \ge \kappa. \] 
Since 
\[ \sum_{j=1}^t \frac{\eps\delta}{j(1-\delta)\log(s/\eps)\log s} \ge \frac{\eps\delta\log t}{(1-\delta)\log(s/\eps)\log s}, \]
we conclude that  $t = s^{\Theta((\kappa\log(s/\eps)\log s)/\eps\delta)}$ suffices. 

Finally, we bound the running time of $\BuildStabilizingDT$.  We first note that~\Cref{eq:large-influence} implies the follow bounding on the depth $\Delta_j$ of $T^\circ_j$: 
\[ 2^{\Delta_j} \le \frac{j\log(s/\eps)\log s}{\eps}.\] 
Therefore, the score of each of the $j$ leaves of $T^\circ_j$ can be computed in time \[ \poly(2^{\Delta_j},n^d) = \poly(j,\log s,n^{\log(1/\eps)/\delta}),\] and so 
 the overall running time of $\BuildStabilizingDT$ is $\poly(t,n^{\log(1/\eps)/\delta})$.  This completes the proof of~\Cref{thm:main}.
\end{proof} 

\section{Conclusion and discussion}

We have designed a new top-down decision tree learning algorithm that achieves provable guarantees for all target functions $f : \bn \to \bits$ with respect to the uniform distribution.   This circumvents impossibility results showing that such a universal guarantee is provably unachievable by existing top-down heuristics such as ID3, C4.5, and CART (and indeed, by any impurity-based heuristic). The analysis of our algorithm draws on Fourier-analytic techniques, and is based on a view of the top-down decision tree construction process as a noise stabilization procedure.    

%

A natural question that our work leaves open is whether our error guarantee can be improved from the current $O(\opt_s) + \eps$ to $\opt_s + \eps$.  It would also be interesting to generalize our algorithm and analysis to handle categorical and real-valued attributes, as well as more general classes of distributions.  In addition to these technical next steps, there are several broader avenues for future work:  

\begin{enumerate}[leftmargin=20pt]

\item {\sl Polynomial-size guarantees for subclasses of target functions?} 
The focus of our work has been on achieving a universal guarantee: our new algorithm  constructs, for all target functions $f$ and size parameters $s$, a decision tree hypothesis of $\mathrm{quasipoly}(s)$ size that achieves error close to that of the best size-$s$ decision tree $f$.  It would be interesting to develop more fine-grained guarantees on the performance of our algorithm.  Are there broad and natural subclasses of target functions for which our $\mathrm{quasipoly}(s)$ bound on the size of the decision tree hypothesis can be improved to $\poly(s)$, or even $O(s)$? 

\item {\sl Harsher noise models?}  We have shown that our algorithm works in the agnostic setting---that is, it retains its guarantees even if the samples are corrupted with adversarial label noise.  Could our algorithm, or extensions of it, withstand even harsher noise models such as malicious~\cite{Val85,KL93} or nasty noise~\cite{BEK02}?

\item {\sl Random forests and gradient boosted decision trees.}  Looking beyond top-down algorithms for learning a single decision tree, we are hopeful that our work will also lead to an improved theoretical understanding of successful ensemble methods such as random forests and gradient boosted decision trees.  Could our techniques be combined with the ideas underlying these methods to give an algorithm that constructs an ensemble of $\poly(s)$ many decision trees of $\poly(s)$ size, that together constitute a good hypothesis for~$f$?


\end{enumerate}

\section*{Acknowledgements}

We thank the NeurIPS reviewers for their thoughtful and valuable feedback. 

Guy, Jane, and Li-Yang were supported by NSF award CCF-192179 and NSF CAREER award CCF-1942123. Neha was supported by NSF award 1704417.

\bibliography{most-influential}{}
\bibliographystyle{alpha}

\end{document}